\documentclass[a4paper,UKenglish,cleveref, autoref, thm-restate]{lipics-v2021}

\title{A Parallel Approach to Counting Exact Covers Based on Decomposability Property} 

\author{Liangda Fang}{Jinan University, China}{}{}{}

\author{Yaohui Luo}{Jinan University, China}{}{}{}

\author{Delong Li}{Jinan University, China}{}{}{}

\author{Xuanxiang Huang}{Nanyang Technological University, Singapore}{}{}{}

\author{Quanlong Guan}{Jinan University, China}{}{}{}

\authorrunning{L. Fang, Y. Luo, D. Li, X. Huang and Q. Guan} 

\Copyright{Liangda Fang, Yaohui Luo, Delong Li, Xuanxiang Huang, and Quanlong Guan} 
\ccsdesc[500]{Theory of computation~Automated reasoning}
\ccsdesc[500]{Mathematics of computing~Graph algorithms}
\ccsdesc[500]{Computing methodologies~Parallel algorithms}

\keywords{Exact cover, Model Counting, Connected components} 

\nolinenumbers 

\EventEditors{}
\EventNoEds{2}
\EventLongTitle{29th International Conference on Theory and Applications of Satisfiability Testing (SAT 2026)}
\EventShortTitle{SAT 2026}
\EventAcronym{SAT}
\EventYear{2026}
\EventDate{20-23 July, 2026}
\EventLocation{Lisbon (ISCTE), Portugal}
\EventLogo{}
\SeriesVolume{42}
\ArticleNo{23}

\usepackage{xifthen}
\usepackage{ifthen}
\usepackage{graphicx}
\usepackage{amsmath}
\usepackage{amsthm}
\usepackage[linesnumbered,ruled,slide,boxed,vlined]{algorithm2e}
\usepackage{array}
\usepackage{float}
\usepackage{multirow}
\usepackage{siunitx}

\graphicspath{{figures/}}

\newcommand{\collection}{\mathcal{U}}
\newcommand{\subcollection}{\mathcal{S}}
\newcommand{\exactCoverSolutions}{\mathcal{E}}

\newcommand{\extendedGraphSet}{\mathcal{G}}

\newcommand{\universe}{U}
\newcommand{\varSet}{V}

\newcommand{\semantics}[1]{\langle #1 \rangle}
\newcommand{\set}[1]{\{ #1 \}}
\newcommand{\size}[1]{|#1|}
\newcommand{\true}{\top}
\newcommand{\false}{\bot}

\newcommand{\assign}{\leftarrow}
\newcommand{\orthJoin}{\sqcup}
\newcommand{\leftfield}[1][]{\ifthenelse{\isempty{#1}} {{\tt left}} {{\tt left}[{#1}]} }
\newcommand{\rightfield}[1][]{\ifthenelse{\isempty{#1}} {{\tt right}} {{\tt right}[{#1}]} }
\newcommand{\upfield}[1][]{\ifthenelse{\isempty{#1}} {{\tt up}} {{\tt up}[{#1}]} }
\newcommand{\downfield}[1][]{\ifthenelse{\isempty{#1}} {{\tt down}} {{\tt down}[{#1}]} }
\newcommand{\headfield}[1][]{\ifthenelse{\isempty{#1}} {{\tt header}} {{\tt header}[{#1}]} }
\newcommand{\sizefield}[1][]{\ifthenelse{\isempty{#1}} {{\tt size}} {{\tt size}[{#1}]} }
\newcommand{\vertexfield}[1][]{\ifthenelse{\isempty{#1}} {{\tt vertex}} {{\tt vertex}[{#1}]} }
\newcommand{\parentfield}[1][]{\ifthenelse{\isempty{#1}} {{\tt parent}} {{\tt parent}[{#1}]} }
\newcommand{\rowfield}[1][]{\ifthenelse{\isempty{#1}} {{\tt row}} {{\tt row}[{#1}]} }

\newcommand\ie{{\it i.e.}}

\begin{document}

\maketitle

\begin{abstract}
    \looseness=-1
    The exact cover problem is a classical NP-hard problem with broad applications in the area of AI.
 	Algorithm DXZ is a method to count exact covers representing by zero-suppressed binary decision diagrams (ZBDDs).
 	In this paper, we propose a zero-suppressed variant of decision decomposable negation normal form (in short, decision-ZDNNF), which is strictly more succinct than ZBDDs.
 	We then design a novel parallel algorithm, namely \textit{DXD}, which constructs a decision-ZDNNF representing the set of all exact covers.
 	Furthermore, we improve DXD by dynamically updating connected components.
 	The experimental results demonstrate that the improved DXD algorithm outperforms all of state-of-the-art methods.
\end{abstract}

\section{Introduction}
\label{sec:introduction}

\looseness=-1
Given a family $\collection$ of subsets over a universe $\universe$, an \textit{exact cover} is a subfamily $\subcollection \subseteq \collection$ such that every element of $\universe$ is contained in exactly one subset of $\subcollection$.
The exact cover problem is a well-known NP-hard problem \cite{Karp1972}. 
In the area of AI, a variety of problems, such as $n$-queens \cite{Knu2000}, Sudoku \cite{GunM2011} and graph coloring \cite{Koi2006}, can be formulated as the exact cover problem.
The exact cover problem can be represented in an incidence matrix, where each column denotes an element of $\universe$ and each row represents a subset in $\collection$. 
If the subset represented by row $r$ contains the element represented by column $c$, then the entry in row $r$ and column $c$ is $1$; otherwise it is $0$.

\looseness=-1
One of the effective methods to enumerate all exact covers is \textit{DLX}, proposed by Knuth~\cite{Knu2000}.
The main insight behind DLX is to perform depth-first backtracking on the incidence matrix via iteratively removing and recovering columns and rows of the matrix.
To implement efficient removing and recovering operations, DLX utilizes dancing link technique \cite{HitN1979} to represent the matrix.
However, DLX has the following two deficiencies: (1) it consumes substantial memory to store all exact covers when the number of solutions tends to be large; and (2) it incurs an unnecessary computational overhead as many duplicate subproblems are solved multiple times.
To overcome these two issues, Nishino et al.~\cite{NisYMN2017} proposed an improved algorithm, namely \textit{DXZ}.
It employs \textit{zero-suppressed binary decision diagrams (ZBDDs)} \cite{Min1993} as the compact representation of all exact cover solutions and maintains one memoization cache to store the partial solutions to previously computed subproblems.
In real-world applications, it often fails to construct a dancing link representing the family $\collection$ of subsets due to the prohibitively large cardinality of $\collection$. 
To address this space issue, Nishino et al.~\cite{NisYN2021} proposed an extension to ZBDDs, namely \textit{DanceDD}, to compactly represent $\collection$, and design an algorithm D3X based on removing and recovering operations on DanceDDs.

\looseness=-1
ZBDDs are the zero-suppressed variant of binary decision diagrams (BDDs) \cite{Akers1978,Bry1986}.
BDDs require each subformula to be a decision node that consists of a decision variable $v$, the positive formula $\alpha$ and the negative formula $\beta$.
The main difference between BDDs and ZBDDs lies in different reduction rules they adopt.
Different reduction rules result in different interpretations of BDDs and ZBDDs.
The most convenient interpretation for BDDs is a mapping from decision nodes into Boolean functions \cite{Bry1986} and for ZBDDs is a mapping from decision nodes into families of sets \cite{Min1994}.
Due to the simplicity of decision nodes, BDDs provide an efficient representation of Boolean functions that supports tractable Boolean operations, including bounded conjunction, bounded disjunction, satisfiability checking and model counting.

\looseness=-1
In a seminal paper, Darwiche~\cite{Dar2001} proposed a novel type of nodes for propositional language, namely \textit{decomposable node}.
A decomposable node consists of several sub-nodes $\beta_1, \cdots, \beta_n$ where the sets of variables of $\beta_i$'s are pairwise disjoint, representing the conjunction $\beta_1 \land \cdots \land \beta_n$. 
By combining decision nodes and decomposable nodes, Darwiche~\cite{Dar2002} later developed another language, namely \textit{decision decomposable negation normal form (decision-DNNF)}.
In theory, compared to BDDs, Beame et al.~\cite{BeaLRS2017} showed that decision-DNNFs are a more succinct representation (more precisely, there is a class of propositional formulas that have polynomial decision-DNNF size but require exponential BDD size).
This was empirically demonstrated in standard graph coloring and circuit benchmarks \cite{HuaD2007}.
The exhaustive DPLL architecture \cite{BirL1999} integrated with connected component-based decomposition \cite{JrP2000}, which is the basic architecture widely used in the majority of state-of-the-art propositional model counters, such as Cachet \cite{SangBB2004}, SharpSAT \cite{Thu2006}, D4 \cite{LagM2017}, SharpSAT-TD \cite{KorJ2021}, essentially transforms a CNF formula into decision-DNNF.

\looseness=-1
Inspired by the succinctness of decision-DNNF compared to BDDs, this paper proposes a novel method to count exact covers based on a zero-suppressed variant of decision-DNNF (in short, decision-ZDNNF).
The main contributions are the following.
(1) We define the syntax and semantics of decision-ZDNNFs.
From the syntactic perspective, decision-ZDNNFs extends ZBDDs by allowing decomposable nodes.
The semantics establishes the interpretation of decision-ZDNNFs in terms of families of sets, which serves as the theoretical foundation for counting exact covers.
(2) We design a novel algorithm, namely \textit{DXD}, which generates a decision-ZDNNF to represent all exact covers.
Algorithm DXD constructs decision nodes like Algorithm DXZ does, and builds decomposable nodes based on the connected components of the primal graph of the incidence matrix.
Thanks to the decomposability of decision-ZDNNFs, Algorithm DXD enables parallel counting. 
(3) We design a dynamic algorithm to update connected components.
Under vertices and edges deletions and insertions, this dynamic algorithm is more efficient than breadth-first search. 
(4) The experimental results show that DXD significantly outperforms the existing methods in terms of runtime.
In particular, decision-ZDNNFs exhibit succinctness advantage over ZBDDs in representing the set of exact cover solutions.

\begin{algorithm}[t]\small
	\caption{${\tt DXZ}(M)$}
	\label{alg:dxz}
	
	\KwIn{$M$: an incidence matrix}
	\KwOut{$\alpha$: a root ZBDD node representing all exact cover solutions for $M$}
	
	\lIf{$M$ is empty}
	{
		\Return{$\top$}
	}
	
	\lIf{${\tt Cache}({\tt Col}(M)) \neq nil$}
	{
		\Return{${\tt Cache}({\tt Col}(M))$}
	}
	
	$c \assign $ a column of $M$ with minimal number of interacted rows
	
	$\alpha \assign \false$
	
	${\tt Cover}(c)$
	
	\ForEach{interacted row $r$ of column $c$}
	{
		\lForEach{interacted column $j$ of row $r$}
		{			
			${\tt Cover}(j)$
		}
		
		$\beta \assign {\tt DXZ}(M)$
		
        \lIf{$\beta \neq \false$}
		{
			$\alpha \assign {\tt BuildDecisionNode}(r, \alpha, \beta)$
		}
		
		\lForEach{interacted column $j$ of row $r$}
		{
			${\tt Uncover}(j)$
		}\vspace*{-1mm}
	}
	
	${\tt Uncover}(c)$
	
	${\tt Cache}({\tt Col}(M)) \assign \alpha$

	\Return{$\alpha$}
	\vspace*{-1mm}
\end{algorithm}

\section{Algorithm DXZ}

\begin{algorithm}\small
	\caption{${\tt Cover}(c)$}
	\label{alg:cover}
	\KwIn{$c$: a column}
	
	$h \assign$ the header cell for column $c$

	$\leftfield[{\rightfield[h]}] \assign \leftfield[h]$ 
	
	$\rightfield[{\leftfield[h]}] \assign \rightfield[h]$
	
	
	
	$i \assign \downfield[h]$
	
	\While{$i \neq h$}{
		
		$j \assign \rightfield[i]$
		
		\While{$j \neq i$}
		{
			$\upfield[{\downfield[j]}] \assign \upfield[j]$
			
			$\downfield[{\upfield[j]}] \assign \downfield[j]$
			
			
			$\sizefield[{\headfield[j]}] \assign \sizefield[{\headfield[j]}] - 1$
			
			
			$j \assign \rightfield[j]$
			
		}
		
		$i \assign \downfield[i]$
		
	}
\end{algorithm}

\begin{algorithm}\small
	\caption{${\tt Uncover}(c)$}
	\label{alg:uncover}
	\KwIn{$c$: a column}
	
	$h \assign$ the header cell for column $c$
	
	$i \assign \upfield[h]$
	
	
	\While{$i \neq h$}
	{
		$j \assign \leftfield[i]$
		
		
		\While{$j \neq i$}
		{
			$\upfield[{\downfield[j]}] \assign j$
			
			$\downfield[{\upfield[j]}] \assign j$
			
			
			$\sizefield[{\headfield[h]}] \assign \sizefield[{\headfield[h]}] + 1$
			
			
			$j \assign \leftfield[j]$
			
		}
		
		$i \assign \upfield[i]$
		
	}
	
	$\leftfield[{\rightfield[h]}] \assign h$
	
	$\rightfield[{\leftfield[h]}] \assign h$
	
\end{algorithm}

\looseness=-1
Algorithm DXZ, illustrated in Algorithm \ref{alg:dxz}, is a recursive procedure that takes an incidence matrix $M$ as input and outputs all exact cover solutions
$\exactCoverSolutions$ for $M$.
It maintains one memoization cache so as to avoid recomputation.
This cache takes the set of remaining columns in the reduced matrix $M$ as key and outputs the address of a ZBDD node $\alpha$ that represents all exact cover solutions of $M$.
If matrix $M$ is empty, then the algorithm outputs $\true$, representing a power set of empty universe (line 1).
Then, it first determines if the current matrix has already been encountered or not (line 2).
If so, it directly returns the ZBDD node stored in the cache.

The algorithm selects a column $c$ from matrix $M$ with the minimal number of interacted rows (line 3) and initializes $\alpha$ as $\false$ (line 4).
We say row $r$ is an \textit{interacted} row of column $c$ (or column $c$ is an \textit{interacted} column of row $r$), iff $M_{r, c} = 1$.
In the following, it stipulates that the union of any exact cover solution contains $c$.
Then, it covers column $c$ from $M$, that is, removing column $c$ together with all covering rows of $c$ from $M$ (line 5).
Moreover, it iterates over each interacted row $r$ of column $c$ (lines 6 - 10).
We obtain a submatrix $M'$ by covering each interacted column $j$ of row $r$ (line 7).
It constructs an ZBDD node $\beta$ representing all exact covers for the submatrix $M'$ (line 8).
If $\beta$ represents a non-empty set, then the algorithm updates $\alpha$ by building an ZBDD node with decision variable $r$, negative successor node $\alpha$ and positive successor node $\beta$, representing all exact covers containing $r$ (line 9).
At the end of the iteration, we recover all deleted columns and rows in $M$ at line 7 so as to backtrack on the next interacted row (line 10).
Finally, the algorithm recovers matrix $M$, terminates with the resulting ZBDD $\alpha$ which is added to the memoization cache (lines 11 - 13).

\looseness=-1
Algorithm DXZ utilizes the data structure: dancing links to implement three operations: (1) search of interacted rows and columns of an incidence matrix; (2) covering a column, and (3) uncovering a column.
In a dancing link, each non-zero entry $M_{i, j}$ of a matrix $M$ is represented as a cell $x$ with $5$ fields: $\leftfield$, $\rightfield$, $\upfield$, $\downfield$ and $\headfield$.
$\leftfield[x]$ and $\rightfield[x]$ denote the cell representing the predecessor and successor non-zero entry $M_{i, k}$ of $M$ with the same row $i$ of $x$, respectively.
$\upfield[x]$ and $\downfield[x]$ have the same meaning but for the same column $j$ of $x$.
Each row (resp. column) of $M$ is a circular doubly-linked list via fields $\leftfield$ and $\rightfield$ (resp. $\upfield$ and $\downfield$).
Each column list has a header cell $h$ with an additional field $\sizefield$ where $\sizefield[h]$ denotes the the number of interacted rows of this column in matrix $M$.
In addition, all of header cells form a special row.
$\headfield[x]$ denotes the header cell $h$ of the column list in which $x$ is.

\looseness=-1
The search of interacted rows of column $c$ can be accomplished via starting from cell $x$ where $x$ is a cell in column $c$, and iteratively traversing the predecessor (or successor) non-zero entry via field $\leftfield[x]$ (or $\rightfield[x]$) until we reach cell $x$.
The search of interacted columns of row $r$ can be similarly implemented by using fields $\upfield[x]$ and $\downfield[x]$.

\looseness=-1
The pseudo-code of covering and uncovering a column are illustrated in Algorithm \ref{alg:cover} and \ref{alg:uncover}, respectively.
Suppose that cell $x$ represents a non-zero entry $M_{r, c}$.
Algorithm \ref{alg:cover} uses the two operations: $\leftfield[{\rightfield[x]}] \assign \leftfield[x]$ and $\rightfield[{\leftfield[x]}] \assign \rightfield[x]$
to delete column $c$ from $M$.
In addition, Algorithm \ref{alg:cover'} returns the set of rows that are covered during the execution of {\tt Cover}$(c)$.
In contrast, Algorithm \ref{alg:uncover} uses the two operations: 
$\leftfield[{\rightfield[x]}] \assign x$ and $\rightfield[{\leftfield[x]}] \assign x$
to recover column $c$ in $M$.
The deletion and recovery of row $r$ are the same except we use fields $\upfield$ and $\downfield$ instead of $\leftfield$ and $\rightfield$, respectively.
The running times of Algorithms \ref{alg:cover} and \ref{alg:uncover} are proportional to the number of affected matrix entries, making DLX efficient for sparse matrices.

\begin{figure*}[t]
    \centering
	\makebox[\textwidth][c]{
    \begin{subfigure}{0.32\textwidth}
        \centering
		{\footnotesize
		\[
        \begin{array}{c@{\;}c}
        &
        \begin{array}{cccccc}
        1 & 2 & 3 & 4 & 5& 6
        \end{array}\\
        \begin{array}{c}
        A \\ B \\ C \\ D \\ E \\ F
        \end{array}
        &
        \left(
        \begin{array}{cccccc}
        1&1&1&1&0&0\\
        1&0&0&1&0&0\\
        0&1&1&0&0&0\\
        0&0&0&0&1&1\\
        0&0&0&0&0&1\\
        0&0&0&0&1&0
        \end{array}
        \right)
        \end{array}
        \]
		}
        \caption{An incident matrix $M$.}
    \end{subfigure}
    \hspace{0.06\textwidth}
    \begin{subfigure}{0.26\textwidth}
        \centering
		{\footnotesize		
		\[
        \begin{array}{c@{\;}c}
        &
        \begin{array}{cc}
        5&6
        \end{array}\\
        \begin{array}{c}
        D\\E\\F
        \end{array}
        &
        \left(
        \begin{array}{cc}
        1&1\\
        0&1\\
        1&0
        \end{array}
        \right)
        \end{array}
        \]
		}
        \caption{A submatrix of $M$.}
    \end{subfigure}
    \hspace{0.06\textwidth}
    \begin{subfigure}{0.26\textwidth}
        \centering
        \includegraphics[width=1.60cm]{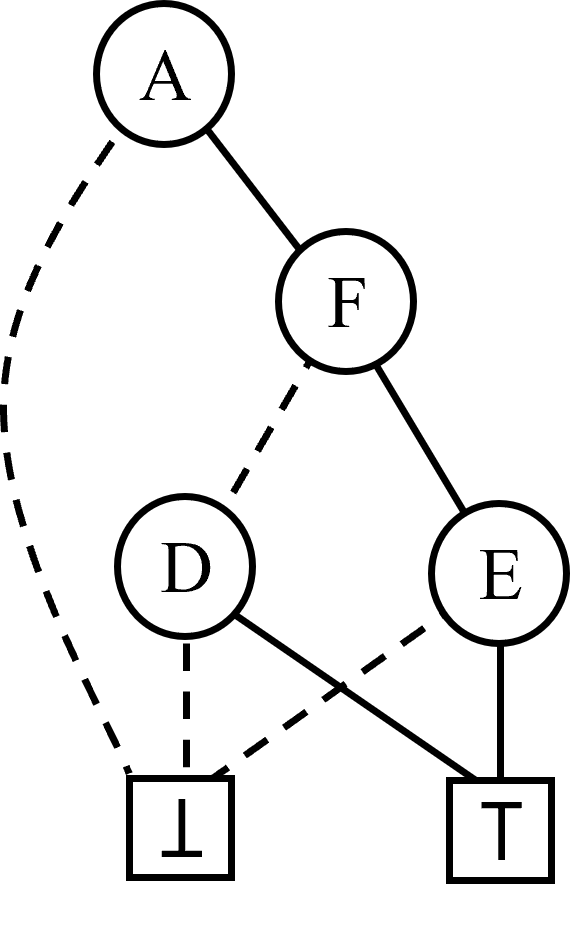}
        \caption{An intermediate ZBDD.}
    \end{subfigure}
	}

    \medskip

	\makebox[\textwidth][c]{
    \begin{subfigure}{0.26\textwidth}
        \centering
		{\footnotesize
		\[
        \begin{array}{c@{\;}c}
        &
        \begin{array}{cccc}
        2&3&5&6
        \end{array}\\
        \begin{array}{c}
        C\\D\\E\\F
        \end{array}
        &
        \left(
        \begin{array}{cccc}
        1&1&0&0\\
        0&0&1&1\\
        0&0&0&1\\
        0&0&1&0
        \end{array}
        \right)
        \end{array}
        \]
		}
        \caption{A submatrix of $M$.}
    \end{subfigure}
    \hspace{0.06\textwidth}
    \begin{subfigure}{0.20\textwidth}
        \centering
        \includegraphics[width=1.60cm]{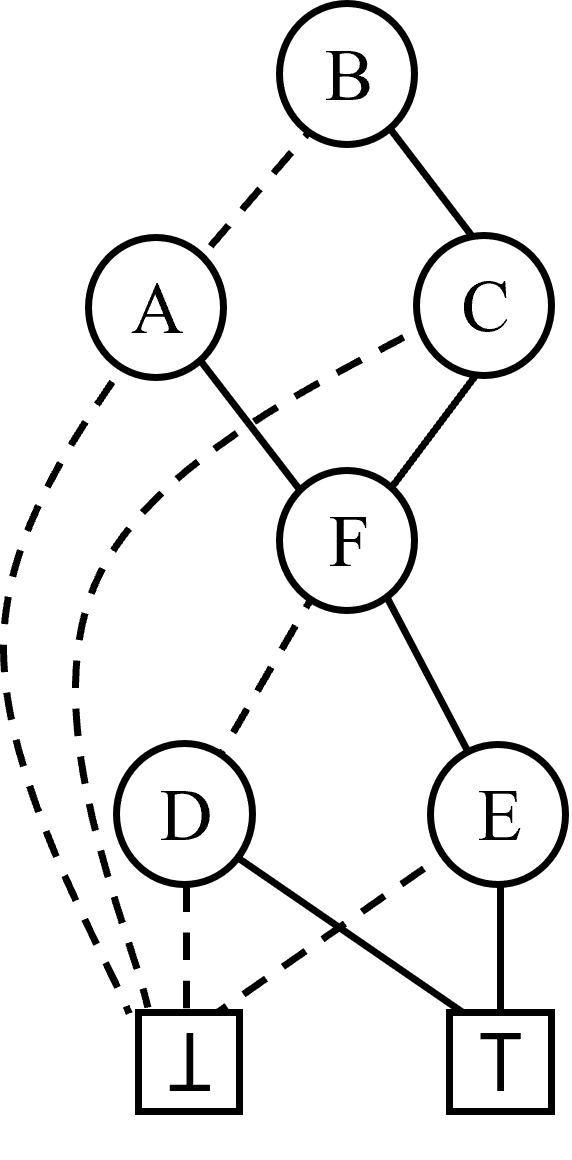}
        \caption{The final ZBDD.}
    \end{subfigure}
	}

    \caption{A run of Algorithm DXZ}
    \label{fig:DLX_DXZ}
\end{figure*}

\begin{example}
	Consider the incidence matrix in Fig. \ref{fig:DLX_DXZ} (a) as input $M$ to Algorithm DXZ.  
	The algorithm first selects Column $1$, whose interacted rows are $A$ and $B$.
	
	For Row $A$, the algorithm covers Columns $1 - 4$ and removes their interacted rows $A$, $B$  and $C$, leaving a submatrix with Columns $5$ and $6$ and Rows $D$, $E$ and $F$, shown in Fig. \ref{fig:DLX_DXZ} (b).
	Then the algorithm recursively runs on the submatrix.
	It chooses Column $5$ since it has the minimal number of interacted rows.
	Rows $D$ and $F$ are the two interacted rows of Column $5$.
	Eliminating Row $D$ and its interacted columns yields the empty matrix, and an exact cover $\set{A, D}$.
	Similarly, removing Row $F$ and its interacted columns generates the submatrix with only Column $6$ and Row $E$, which corresponds to another exact cover $\set{A, E, F}$.
	The ZBDD representing the above two solutions are shown in Fig. \ref{fig:DLX_DXZ}(c).

	Now, the algorithm handles the case: Row $B$.
	Covering $B$ removes columns $1$ and $4$ and their interacted rows, resulting in the submatrix shown in Fig. \ref{fig:DLX_DXZ}(d).
	Since Column $2$ has the minimum number of interacted rows, the algorithm covers this column and obtains the same submatrix with Columns $\set{5, 6}$ and Rows $\set{D, E, F}$ as in the case of selecting Row $A$.
	The algorithm directly retrieves the previously constructed ZBDD node from the cache.
	Hence, we get two exact covers: $\set{B, C, D}$ and $\set{B, C, E, F}$.
	Combining both two cases, Algorithm DXZ returns the final ZBDD shown in Fig.~\ref{fig:DLX_DXZ}(e).\qed
\end{example}

\section{Algorithm DXD}

\looseness=-1
In this section, we first provide the syntax and semantics of zero-suppressed decision-DNNF that serves as the theoretical foundation for representing families of sets.
Then, we develop a novel algorithm, namely DXD, by integrating the process of constructing decomposable nodes into Algorithm DXZ based on the concept of connected components.
Furthermore, we design an improvement to DXD, namely DynDXD, which dynamically updates connected components. 
Finally, we analyze the complexity of the three algorithms: DXZ, DXD and DynDXD.

\subsection{Zero-Suppressed Decision-DNNF}
\begin{definition}\label{def:DZDNNF} \rm
	A formula $\alpha$ is in zero-suppressed decision-DNNF, iff 
	\begin{itemize}
		\item $\alpha$ is $\true$, $\false$ or $v$ where $v$ is a variable; \\
		Semantics: $\semantics{\true} = \set{\emptyset}$, $\semantics{\false} = \emptyset$ and $\semantics{v} = \set{\set{v}}$;
		
		\item $\alpha$ is a decision node that is a tuple $(v, \beta_1, \beta_2)$ s.t. $v \notin \varSet(\beta_1)$ and $v \notin \varSet(\beta_2)$ where $V(\beta_i)$ denotes the variables occurring in the node $\beta_i$; \\
		Semantics: $\semantics{(v, \beta_1, \beta_2)} = (\set{\set{v}} \orthJoin \semantics{\beta_1})\cup \semantics{\beta_2}$;
		
		\item $\alpha$ is a decomposable node that contains the set of nodes $\set{\beta_1, \cdots, \beta_n}$ s.t. $\varSet(\beta_i) \cap \varSet(\beta_j) = \emptyset$ for $i \neq j$; \\		
		Semantics: $\set{\beta_1, \cdots, \beta_n} = \semantics{\beta_1} \orthJoin \cdots \orthJoin \semantics{\beta_n}$.
	\end{itemize}	
\end{definition}

\looseness=-1
Given a decision node $\alpha = (v, \beta_1, \beta_2)$, we say $v$ is the decision variable of $\alpha$, $\beta_1$ is the positive formula and $\beta_2$ is the negative formula. 
Decision-DNNFs are interpreted as Boolean functions and the zero-suppressed variant as families of sets. 
Any negative literal $\neg v$ represents the same family of sets $\set{\emptyset}$ as $\true$.
Therefore, the syntax of decision-ZDNNFs does not contain negative literals.
The decision node $(v, \beta_1, \beta_2)$ of a zero-suppressed decision-DNNF formula can be equivalently simplified to $\beta_2$ when $\semantics{\beta_1} = \emptyset$.
In contrast, the above simplification is valid (that is, the decision node $(v, \beta_1, \beta_2)$ and the sub-node $\beta_2$ represent the same Boolean function) in decision-DNNFs when $\beta_1$ and $\beta_2$ represent the same Boolean function.

\begin{theorem}\label{thm:succinct}
Decision-ZDNNFs are exponentially more succinct than ZBDDs.
\end{theorem}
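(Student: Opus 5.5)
Two things must be shown: every ZBDD is already a decision-ZDNNF, and some family of sets has polynomial-size decision-ZDNNFs but only exponential-size ZBDDs.

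The first part is immediate from Definition~\ref{def:DZDNNF}. A ZBDD node with variable $v$, 1-child $\beta_1$ and 0-child $\beta_2$ is exactly the decision node $(v,\beta_1,\beta_2)$. In a ZBDD, $v$ precedes every variable below it in the order, so $v\notin\varSet(\beta_1)\cup\varSet(\beta_2)$. The semantics $(\set{\set{v}}\orthJoin\semantics{\beta_1})\cup\semantics{\beta_2}$ coincides with the standard ZBDD interpretation of Minato. Hence decision-ZDNNFs are at least as succinct as ZBDDs, with no size increase.

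For the exponential separation I would not build a new lower bound. Instead I would transfer the known separation between decision-DNNFs and (ordered) BDDs of Beame et al.~\cite{BeaLRS2017} through the standard correspondence between Boolean functions and families of sets. To a Boolean function $f$ over $V$ associate its family of models $\mathcal{F}_f=\set{S\subseteq V : S \text{ is the set of true variables of a model of } f}$. The plan has three steps.

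\textbf{Step 1.} Show that the ZBDD and BDD sizes of $\mathcal{F}_f$ and $f$ are polynomially related under the same variable order: converting between them costs at most a factor $O(|V|)$ (Knuth's/Minato's bound; one inserts the nodes skipped by zero-suppression). Hence an exponential BDD lower bound for $f$ yields an exponential ZBDD lower bound for $\mathcal{F}_f$.

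\textbf{Step 2.} Convert a decision-DNNF $D$ for $f$ into a decision-ZDNNF for $\mathcal{F}_f$ of polynomial size. Recurse bottom-up, carrying the variable set $\varSet$ of each node. A decision node on $v$ with children $\gamma_1,\gamma_0$ becomes $(v,\gamma_1',\gamma_0')$. Each child must first be padded with the variables $W$ that appear on the other branch but not its own. This uses a decomposable node conjoined with a gadget for "$w$ free", namely $(w,\true,\true)$, for each $w\in W$. A decomposable conjunction node maps to a decomposable node, since disjointness of variables is preserved. Literals $v$ and $\neg v$ map to $v$ and $\true$ respectively, with padding handled as above. Each node gains at most $O(|V|)$ padding nodes, so the size grows by a polynomial factor. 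Correctness follows by induction on $D$, showing that the semantics of the translated node is the family of models of the original over its variable set.

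\textbf{Step 3.} Instantiate both steps with the Beame et al. family, which has polynomial decision-DNNFs but exponential OBDDs for every order. Exponential size for every order carries over to ZBDDs by Step 1. Combined with the first part, this gives the theorem.

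The main obstacle is Step 1 together with the choice of the BDD notion in Beame et al. Their separation may be stated against FBDDs or against OBDDs under all orders. The ZBDD lower bound must hold for every variable order, so I would first check that their family gives an exponential OBDD lower bound for all orders. If it only targets a particular model, the fallback is to use a direct example: a disjoint union of many copies of a function that is hard for ordered decision diagrams under any order, for instance an interleaved equality/hidden-weighted-bit gadget. Such a union is trivially polynomial with a top decomposable node, and its ZBDD hardness follows from a standard subfunction-counting argument.
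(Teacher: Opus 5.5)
Your proposal follows the paper's own route: it transfers a known decision-DNNF versus OBDD separation through the $O(n)$-factor conversion between reduced BDDs and ZBDDs (via quasi-reduced BDDs) and a polynomial decision-DNNF$\to$decision-ZDNNF conversion, and your explicit padding with $(w,\true,\true)$ gadgets, together with the observation that ZBDDs already are decision-ZDNNFs, actually proves the step the paper only asserts with ``Similarly''. Discard the fallback, however: a decomposable node over variable-disjoint copies gives no advantage over OBDDs (under a block order their sizes just add, and interleaved equality is easy under the interleaved order), so instead take a single function such as indirect storage access or hidden weighted bit, which has polynomial-size FBDDs (a special case of decision-DNNFs) yet exponential OBDDs under every variable order, and this also settles your ``every order'' concern.
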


\begin{proof}
	In BDDs, there are three reduction rules:
	\begin{itemize}
		\item Standard reduction rule: if a decision node $\alpha$ has two identical successors, then remove the node $\alpha$ and redirect all incoming edges of $\alpha$ to its successor.
		
		\item Zero-suppressed reduction rule: if the positive successor of a decision node $\alpha$ is the terminal node $\false$, then remove the node $\alpha$ and redirect all incoming edges of $\alpha$ to its successor.
		
		\item Isomorphic reduction rule: if two decision nodes $\alpha$ and $\alpha'$ have the same decision variable $v$ and two successors $\beta_1$ and $\beta_2$, then remove the node $\alpha'$ and redirect all incoming edges of $\alpha'$ to $\alpha$.
	\end{itemize}
	
	Traditional reduced BDDs (RBDDs) \cite{Bry1986} allow standard and isomorphic reduction rules to eliminate nodes of a BDD.
	For a Boolean function, a BDD has the minimal size iff none of standard and isomorphic reduction rules applies on it.	
	ZBDDs \cite{Min1993} use zero-suppressed reduction rules instead of standard one. 
	Similarly, a ZBDD has the minimal size iff none of zero-suppressed and isomorphic reduction rules applies on it.	
	Quasi-reduced BDDs (QBDDs) \cite{KimC1990} allow only isomorphic reduction rule.
	By recovering the nodes eliminated by the standard (resp. zero-suppressed) reduction rule, a RBDD (resp. ZBDD) $\alpha$ can be converted into an equivalent QBDD with at most $n \times \size{\alpha}$ where $n$ is the number of variables and $\size{\alpha}$ is the number of nodes in $\alpha$ \cite{Weg2000,Knu2016}.
	As a corollary, RBDDs and ZBDDs are equally succinct.	
	Similarly, decision-DNNF and decision-ZDNNFs are equally succinct.
	In addition, there are a class of Boolean functions with polynomial-size decision-DNNF representations \cite{Weg1987} but exponential-size RBDDs \cite{DarM2002,BeaLRS2017}.
	Hence, decision-DNNF are exponentially more succinct than RBDDs.	
	In summary, we obtain that decision-ZDNNF are exponentially more succinct than ZBDDs.	
\end{proof}

\begin{algorithm}[t]\small
\caption{${\tt DXD}(M, \extendedGraphSet)$}
\label{alg:dxd}

\KwIn{$M$: an incidence matrix $\extendedGraphSet$: the set $\set{G_i = (V_i, E_i)}_{1 \leq i \leq n}$ of connected components of the primal graph of $M$}
\KwOut{$\alpha$: a root decision-ZDNNF node representing all exact cover solutions for $M$}

  \lIf{$M$ is empty}{
      \Return{$\true$}
  }

  \lIf{$M$ has only one row $r$ that interacting all columns}{
      \Return{$r$}
  }

  \lIf{${\tt Cache}({\tt Col}(M)) \neq nil$}
  {
      \Return{${\tt Cache}({\tt Col}(M))$}
  }

  \If{$|\extendedGraphSet| \geq 2$}
  {
  	$M_1 \cdots, M_n \assign {\tt DecomposeMatrix}(M, \extendedGraphSet)$
  	
      \lFor{$i = 1$ \KwTo $n$} {
          $\beta_i \assign {\tt DXD}(M_i, \set{G_i})$
      }
      
      $\alpha \assign {\tt BuildDecomposableNode}(\beta_1, \cdots, \beta_n)$
     
      ${\tt Cache}({\tt Col}(M)) \assign \alpha$

      \Return{$\alpha$}
      \vspace*{-1mm}
  } 
  
  $c \assign $ a column of $M$ with minimal number of interacted rows
  
  $\alpha \assign \false$

  $V^d_c \assign {\tt Cover'}(c)$
  
  $E^d_c \assign$ the set of edges incident to any vertex of $V^d_c$ in $\extendedGraphSet$

  ${\tt DecUpdateCC}(\extendedGraphSet, V^d_c, E^d_c)$

  \ForEach{interacted row $r$ of column $c$}
  {
      \ForEach{interacted column $j$ of row $r$}
      {
      		$V^d_j \assign {\tt Cover'}(j)$
      		
      		$E^d_j \assign$ the set of edges incident to any vertex of $V^d_j$ 
      		
      		${\tt DecUpdateCC}(\extendedGraphSet, V^d_j, E^d_j)$
      		\vspace*{-1mm}
      }

      ${\tt \beta} \assign {\tt DXD}(M, \extendedGraphSet)$

      \lIf{${\tt \beta} \ne \false$}{
          $\alpha \assign {\tt BuildDecisionNode}(r, \alpha, \beta)$
      }

      \ForEach{interacted column $j$ of row $r$}
      {
        ${\tt Uncover}(j)$
        
        ${\tt IncUpdateCC}(\extendedGraphSet, V^d_j, E^d_j)$
        \vspace*{-1mm}
      }\vspace*{-1mm}
  }

  ${\tt Uncover}(c)$

  ${\tt IncUpdateCC}(\extendedGraphSet, V^d_c, E^d_c)$

  ${\tt Cache}({\tt Col}(M)) \assign \alpha$

  \Return{$\tt \alpha$}
\vspace*{-1mm}
\end{algorithm}

\subsection{The Main Algorithm}
\looseness=-1
To build decomposable nodes, we make use of the concept of primal graphs.
Given an incidence matrix $M$, the primal graph $G = (V, E)$ of $M$ is defined as: (1) the vertex set $V$ corresponds to the rows of $M$; and (2) an edge $(r_1,r_2)\in E$ exists if and only if rows $r_1$ and $r_2$ share at least one common interacted column in $M$.
We compute the set $\extendedGraphSet: \set{G_i = (V_i, E_i) \mid 1 \leq i \leq n}$ of connected components of $G$.
These disjoint subsets $V_1, \cdots, V_n$ form a partition of the row set $V$.
Rows belonging to the same component are pairwise connected through shared columns, while rows from different components share no common interacted columns.
Each row subset $V_i$ thus induces an independent submatrix $M_i$ of $M$, consisting of rows in $V_i$ and the columns interacting with them.
All connected components can be generated in linear time in the size of the graph using breadth-first search \cite{HopT1973}.
Suppose that $M$ is decomposed into submatrices $M_1, \cdots, M_n$ in this manner.
Then the set $\exactCoverSolutions$ of exact covers for $M$ factorizes as the orthogonal join of the solution sets $\exactCoverSolutions_i$ of the submatrices $M_i$, \ie, $\exactCoverSolutions = \exactCoverSolutions_1 \orthJoin \cdots \orthJoin \exactCoverSolutions_n$.
This factorization directly corresponds to a decomposable node in the resulting decision-ZDNNF.
Moreover, each set $\exactCoverSolutions_i$ can be computed in parallel since all submatrices are independent.

\looseness=-1
Following the above idea, we develop a modified algorithm DXD, illustrated in Algorithm \ref{alg:dxd}, which integrates the process of building decomposable nodes into Algorithm DXZ. 
It takes an incidence matrix $M$ and the set $\extendedGraphSet$ of connected components of the primal graph of $M$ as input. 
It first examines if the incidence matrix $M$ is empty (line 1) or has only one row that covers all columns (line 2).
In either case, it returns the terminal node $\true$ or the row node $r$ as a literal node of decision-ZDNNF, respectively.
If the set $\extendedGraphSet$ has more than one connected component, then the incidence matrix $M$ can be decomposed into several submatrices $M_1, \cdots, M_n$.
Algorithm \ref{alg:dxd} recursively generates each zero-suppressed decision-DNNF node $\beta_i$ representing all exact covers for $M_i$ for $1 \leq i \leq n$ (line 6). 
All nodes $\beta_i$ form a decomposable node $\alpha$ (lines 7).
If $M$ cannot be decomposed, then it will construct a hierarchical decision node via deleting all of interacted rows and columns for the chosen column $c$ as Algorithm DXZ does (lines 10 - 28).
In addition, covering columns not only remove the set $V^d$ of rows but also records them (lines 12 \& 17).
Furthermore, the set $E^d$ of edges incident to $V^d$ is also recorded (lines 13 \& 18).
Then, it updates the set $\extendedGraphSet$ of connected components by deleting both $V^d$ and $E^d$ (lines 14 \& 19). 
These vertices and edges will be later inserted into $\extendedGraphSet$ after the corresponding uncover operations (lines 24 \& 26).

\begin{example}

	\begin{figure}
		\centering
		\includegraphics[width=1.0\textwidth]{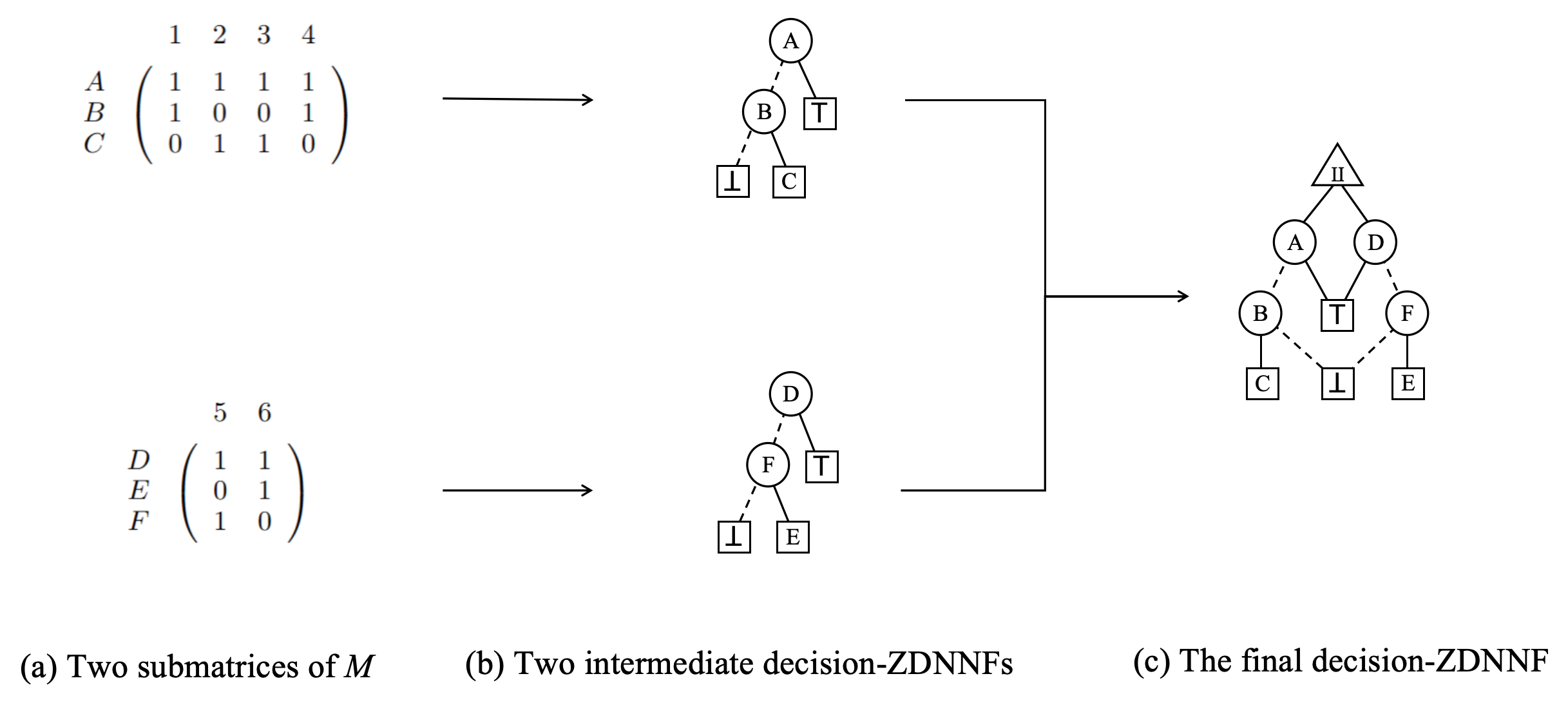}
		    \vspace*{-3mm}
		\caption{A run of Algorithm DXD.}
		    \vspace*{-3mm}
		\label{fig:dxd_dnnf_con}
	\end{figure}
	
\looseness=-1
We introduce how DXD constructs decision-ZDNNFs.
Fig. \ref{fig:dxd_dnnf_con} illustrates the procedure of constructing the decision-ZDNNF representing the set of all exact covers of the matrix shown in Fig. \ref{fig:DLX_DXZ}(a).
Initially, the matrix shown in Fig. \ref{fig:DLX_DXZ}(a) can be decomposed into two mutually independent submatrices $M_1$ with rows $\set{A, B, C}$ and $M_2$ with rows $\set{D, E, F}$, shown in Fig. \ref{fig:dxd_dnnf_con}(a).
As depicted in Fig. \ref{fig:dxd_dnnf_con}(b), we obtain decision nodes for matrix $M_1$ and for matrix $M_2$ by iteratively covering and uncovering columns.
The final decision-ZDNNF, representing all of the exact covers of matrix $M$, is obtained by combining the two decision nodes into a decomposition node labeled by a triangle with $\orthJoin$, depicted in Fig. \ref{fig:dxd_dnnf_con}(c). \qed

\end{example}

\begin{algorithm}\small
	\caption{${\tt Cover'}(c)$}
	\label{alg:cover'}
	\KwIn{$c$: a column}
    \KwOut{$V^{d}_c$: the set of covered rows}
	
	$h \assign$ the header cell for column $c$
	
	$V^{d}_c \assign \emptyset$
	
	$\leftfield[{\rightfield[h]}] \assign \leftfield[h]$ 
	
	$\rightfield[{\leftfield[h]}] \assign \rightfield[h]$
	
	$i \assign \downfield[h]$
	
	\While{$i \neq h$}{
		
		$j \assign \rightfield[i]$
		
		\While{$j \neq i$}
		{
			$\upfield[{\downfield[j]}] \assign \upfield[j]$
			
			$\downfield[{\upfield[j]}] \assign \downfield[j]$
			
			$\sizefield[{\headfield[j]}] \assign \sizefield[{\headfield[j]}] - 1$
			
			$j \assign \rightfield[j]$
			
		}

        $V^{d}_c \assign V^{d}_c \cup \set{\rowfield[i]}$
		
		$i \assign \downfield[i]$
		
	}

    \Return{$V^{d}_c$}
\end{algorithm}

\begin{algorithm}[t]\small
\caption{${\tt DecUpdateCC}(\extendedGraphSet, V^{d}, E^{d})$}
\label{alg:decgencc}
\KwIn{$\extendedGraphSet$: the set of connected components of the graph $G$ \\
	  \hspace*{8.5mm} $V^{d}$: the set of vertices of $G$ to be deleted \\
	  \hspace*{8.5mm} $E^{d}$: the set of edges of $G$ to be deleted
}
    \ForEach{$(u, v) \in E^{d}$}
    {
    	$(T, E^{n}) \assign {\tt FindCC}(u)$
    	
        \lIf{$(u, v) \in E^{n}$}
        {
        	$E^{n} \assign E^{n} \setminus \set{(u, v)}$
        }
        \Else
        {
			$Found = \textbf{false}$
			
            $T_u, T_v \assign {\tt Cut}(u, v)$

            \lIf{$\sizefield[T_v] > \sizefield[T_u]$}{
                Swap $T_u$ and $T_v$
            }

            \ForEach{$(x, y) \in E^{n}$}
            {
            	\If{exactly one of $x$ and $y$ is in $T_u$}
            	{
            		$T \assign {\tt Link}(x, y)$
            		
            		$E^{n} \assign E^{n} \setminus \set{(x, y)}$
            		
            		$Found = \textbf{true}$
            		
            		\textbf{break}\vspace*{-1mm}           	
            	}
            	\lElseIf{$x, y \notin T_v$}
            	{
            		$E^{n}_{u} \assign E^{n}_{u} \cup \set{(x, y)}$ 
            	}
            	\lElse(\tcc*[f]{$x, y \in T_v$})
            	{
            			$E^{n}_{v} \assign E^{n}_{v} \cup \set{(x, y)}$
            	}
            }

            \If{$\neg Found$}
            {
            	$\extendedGraphSet \assign (\extendedGraphSet \setminus \set{(T, E^{n})}) \cup \set{(T_u, E_u^{n}), (T_v, E_v^{n})}$
            }\vspace*{-1mm}
        }\vspace*{-1mm}
    }
    
    \lForEach{$v \in V^d$}   	
    {
    	$\extendedGraphSet \assign \extendedGraphSet \setminus \set{(\set{v}, \emptyset), \emptyset)}$
    }

\vspace*{-1mm}
\end{algorithm}

\subsection{Dynamically Updating Connected Components}

\looseness=-1
In Algorithm DXD, one critical step is to generate connected components of the primal graph under vertices and edges deletions and insertions.
After covering several columns from $M$, the edges incident to any one vertex corresponding to a covered column are removed from the primal graph. 
Subsequently, the new set of connected components will be recomputed from scratch.
Similarly, Algorithm DXD performs recomputation after uncovering operations.
This motivates us to develop a more efficient approach to update components in a dynamic manner.

\looseness=-1
We first introduce several concepts in graph theory.
Let $G = (V, E)$ be a connected undirected graph.
A \textit{spanning tree} $G' = (V, E')$ of $G$ is a subgraph of $G$ s.t. $|E'| = |V| - 1$ and all vertices of $V$ are still connected.
The set $E^n$ of \textit{non-tree edges} of $G$ relative to a spanning tree $T$ is defined as $E \setminus E'$.
A connected component $G$ can be represented by a pair $(T, E^n)$ where $T$ is a spanning tree $T$ and $E^n$ is the set of non-tree edges relative to $T$.

\looseness=-1
We hereafter introduce several basic operations on connected components and spanning trees.
$\tt{FindCC}(v)$ locates the connected component $(T, E^n)$ with vertex $v$. 
When two vertices $u$ and $v$ are in two different spanning trees $T_u$ and $T_v$, $\tt{Link}(u, v)$ merges $T_u$ and $T_v$ by connecting $u$ and $v$. 
When $u$ and $v$ are in the same spanning tree $T$, $\tt{Cut}(u, v)$ removes edge $(u, v)$ from $T$, and returns two spanning trees $T_u$ with $u$ and $T_u$ with $v$, respectively.

\looseness=-1
Algorithm \ref{alg:decgencc} illustrates a decremental approach to updating connected components. 
It takes a set of connected components $\extendedGraphSet$ of a graph $G$ and a set $E^{d}$ of edges of $G$ as input, and updates $\extendedGraphSet$ by removing $E^d$ from $G$.
The algorithm visits each edge $(u, v) \in E^{d}$ sequentially (lines 1 - 17).
It first obtains the connected component $(T, E^n)$ in which $u$ is (line 2).
If an edge $(u, v)$ is a non-tree edge, it simply removes it from $E^{n}$ (line 3).
This does not affect the connectivity of any two vertices.
Otherwise, $(u, v)$ is an edge in $T$.
In this case, the algorithm splits the spanning tree $T$ into two subtrees $T_u$ and $T_v$ (line 6).
To improve the efficiency of subsequent operations, we let $T_v$ be the spanning tree with smaller size (line 7).
Then the algorithm attempts to search for a non-tree edge so as to connect $T_u$ and $T_v$ (lines 8 - 15).
If there is a non-tree edge $(x, y)$ s.t. exactly one of $x$ and $y$ is in $T_v$, then $(x, y)$ is the desired edge (line 9).
The algorithm reconnects the two trees $T_u$ and $T_v$ as a new spanning tree $T$, and removes $(x, y)$ from $E^n$ (lines 10 - 12).
Otherwise, we construct two sets of non-tree edges $E^n_u$ and $E^n_v$ during the search (lines 14 \& 15).
In this case, $(T_u, E^n_u)$ and $(T_v, E^n_v)$ are two different connected components with vertices $u$ and $v$, respectively, and the original component $(T, E^n)$ is replaced by $(T_u, E^n_u)$ and $(T_v, E^n_v)$ (lines 16 \& 17).
After deleting all edges incident to $V^d$, the connected component with any vertex $v \in V^d$ is a single vertex with no edge, that is, $((\set{v}, \emptyset), \emptyset)$.
Finally, we delete these components from $\extendedGraphSet$ (line 18).

\looseness=-1
Algorithm \ref{alg:incgencc} elaborates an incremental approach to updating connected components under edge insertions, which is a reverse process to Algorithm \ref{alg:decgencc}.
The algorithm first adds all vertices $v \in V^d$ into the graph $G$, which adds a $((\set{v}, \emptyset), \emptyset)$ into $\extendedGraphSet$ (line 1).
For each edge $(u, v) \in E^{d}$, the algorithm then identifies the two connected components $(T_u, E^{n}_{u})$ and $(T_v, E^{n}_{v})$ in which $u$ and $v$ are, respectively (lines 3 \& 4).
If $u$ and $v$ are in different spanning trees, then the two trees $T_u$ and $T_v$ are merged as a new one $T$ by adding $(u, v)$ and the two sets $E^{n}_{u}$ and $E^{n}_{v}$ of non-tree edges are combined as $E^n$ (lines 6 \& 7).
The algorithm then removes the two components $(T_u, E^{n}_{u})$ and $(T_v, E^{n}_{v})$ from $\extendedGraphSet$, and adds the new one $(T, E^n)$ (line 8).
Otherwise, $u$ and $v$ are connected before inserting edge $(u, v)$.
So we simply add edge $(u, v)$ into $E^{n}_{u}$ (line 9).

\begin{algorithm}[t]\small
	\caption{${\tt IncUpdateCC}(\extendedGraphSet, V^{d}, E^{d})$}
	\label{alg:incgencc}
	\KwIn{$\extendedGraphSet$: the set of connected components of the graph $G$ \\		
		\hspace*{8.5mm} $V^{d}$: the set of vertices of $G$ to be deleted \\
		\hspace*{8.5mm} $E^{d}$: the set of edges of $G$ to be deleted
	}

\lForEach{$v \in V^d$}
{
	$\extendedGraphSet \assign \extendedGraphSet \cup \set{(\set{v}, \emptyset)}$
}

\ForEach{$(u, v) \in E^{d}$}
{
	$(T_u, E^{n}_u) \assign {\tt FindCC}(u)$ 
	
	$(T_v, E^{n}_v) \assign {\tt FindCC}(v)$ 
	
	\If{$T_u \neq T_v$}
	{
		$T \assign {\tt Link}(u, v)$ 
		
		$E^{n} \assign E^{n}_u \cup E^{n}_v$
		
		$\extendedGraphSet \assign (\extendedGraphSet \setminus \set{(T_u, E^{n}_u), (T_v, E^{n}_v)}) \cup \set{(T, E^{n})}$
	}
	\lElse
	{
		$E^{n}_u \assign E^{n}_u \cup \set{(u, v)}$		
	}
}\vspace*{-1mm}
\end{algorithm}

\begin{example}

\looseness=-1
Fig. \ref{fig:Dyn_CC}(a) depicts a connected undirected graph $G = (V, E)$. In this graph,  
$V=\{v_1,v_2,v_3,v_4,v_5\}$ and 
$E = \{e_1, e_2, e_3, e_4, e_5, e_6\}$.
A spanning tree $T$ of $G$ is $(V, E')$ where $E' = \set{e_1, e_3, e_5, e_6}$, denoted by solid lines.
The set of non-tree edges relative to $T$ is $E^n = \set{e_2, e_4}$, denoted by dashed lines.
Since $G$ has only one connected component $(T, E^n)$, the set $\extendedGraphSet$ of connected components is $\set{(T, E^n)}$.

\looseness=-1
The algorithm deletes vertex $v_1$ and its associated edges $\set{e_1, e_2, e_3}$ from $G$ sequentially.
Since edge $e_1$ is not a non-tree edge, the spanning tree $T$ is split into two subtrees $T_{v_1}$ and $T_{v_2}$, shown in Fig. \ref{fig:Dyn_CC}(b).
A replacement edge $e_4$ in $E^n$ is found.
We update the spanning tree $T$ by connecting $T_{v_1}$ and $T_{v_2}$ via $e_4$, and the set of non-tree edges $E^n: \set{e_2}$.
Then, the algorithm directly removes the edge $e_2$ from $E^n$ since it is a non-tree edge (Fig.~\ref{fig:Dyn_CC}(c)).
So $E^n$ becomes an empty set.
For edge $e_3$, no edges exist in $E^n$.
Consequently, the graph is decomposed into two connected components with spanning trees $T'_{v_1}$ and $T'_{v_2}$ and empty sets of non-tree edges (Fig.~\ref{fig:Dyn_CC}(d)).
Finally, vertex $v_1$ is removed.
So is $(T'_{v_1}, \emptyset)$.
We obtain the final graph $\extendedGraphSet = \set{(T'_{v_2}, \emptyset)}$, shown in Fig. \ref{fig:Dyn_CC}(e).

\looseness=-1
We now illustrate how vertex $v_1$ and its associated edges $\set{e_1, e_2, e_3}$ are inserted into the graph.
First, we add a connected component $(T'_{v_1}, \emptyset)$ with only vertex $v_1$ into $\extendedGraphSet$ where $T'_{1} = (\set{v_1}, \emptyset)$.
Now $\extendedGraphSet$ is $\set{(T'_{v_1}, \emptyset), (T'_{v_2}, \emptyset)}$, shown in Fig. \ref{fig:Dyn_CC}(f).
Then, we add edge $e_1: (v_1, v_2)$.
Since vertices $v_1$ and $v_2$ are in two different trees $T'_{v_1}$ and $T'_{v_2}$, we acquire a new spanning tree $T_*$ by merging  $T'_{v_1}$ and $T'_{v_2}$ via $e_1$, shown in Fig. \ref{fig:Dyn_CC}(g).
For edges $e_2$ and $e_3$, both endpoints are contained in $T_*$, and thus they are added to the set $E^n_*$ of non-tree edges, shown in Fig. \ref{fig:Dyn_CC}(h).

Although $T \neq T_*$ and $E^n \neq E^n_*$, both $(T, E^n)$ and $(T_*, E^n_*)$ represent the same connected components. \qed
\end{example}

\begin{figure}[t]
\centering
\includegraphics[width=1.0\textwidth]{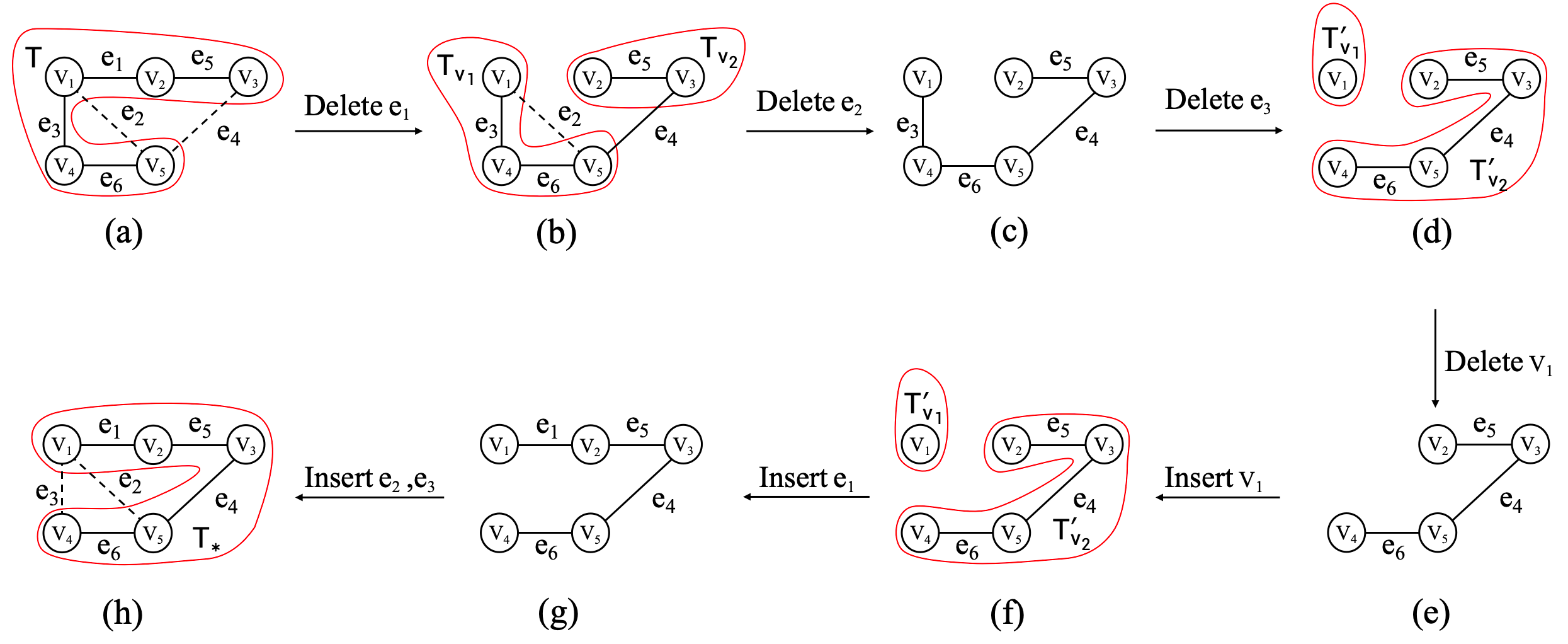}
    \vspace*{-3mm}
\caption{An example of dynamically updating connected components.}
    \vspace*{-3mm}
\label{fig:Dyn_CC}
\end{figure}

\looseness=-1
To implement Algorithms \ref{alg:decgencc} and \ref{alg:incgencc}, we adopt the following data structures to maintain connected components.
A spanning tree can be represented by an Euler tour, which is a linear order of vertices corresponding to a traversal of the tree in which each edge is visited exactly twice.
For each spanning tree $T$, we use a Splay tree \cite{SleT1985} to represent the Euler tour of $T$, together with two hash sets to store the two sets of vertices of $T$ and of non-tree edges relative to $T$.
We also use a hash table that maps each vertex $v$ to its corresponding node in the Splay tree.
With the above data structures, the operations {\tt FindCC}, {\tt Link} and {\tt Cut} can be accomplished in $O(\log |V|)$ where $|V|$ is the number of vertices in the graph $G$.

\begin{theorem}\label{thm:dynamic_complexity}
	Let $G$ be a graph $(V, E)$ with the set $\extendedGraphSet$ of connected components.	
	\begin{itemize}
		\item The time complexity of Algorithm \ref{alg:decgencc} is $O(|E^{d}| (\log |V| + |E|) + |V|)$.
		\item The time complexity of Algorithm \ref{alg:incgencc} is $O(|E^{d}|  \log |V| + |V|)$.
	\end{itemize}
\end{theorem}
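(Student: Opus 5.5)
The plan is a line-by-line cost accounting of Algorithms~\ref{alg:decgencc} and~\ref{alg:incgencc}. It rests on the stated implementation: each spanning tree is an Euler-tour Splay tree, and each tree has hash sets for its vertices and its non-tree edges. Under this implementation {\tt FindCC}, {\tt Link} and {\tt Cut} each take $O(\log |V|)$, membership tests and insert/delete in the hash sets take expected $O(1)$, and we can assume $\sizefield[T]$ is maintained at the Splay root, so it is readable in $O(1)$. Each bullet is then obtained by bounding the work per loop iteration and adding the cost of the vertex loop.

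\textbf{Decremental case.} Fix one iteration of the outer loop, on edge $(u,v)\in E^d$. Line~2 costs $O(\log|V|)$ and line~3 costs $O(1)$. In the tree-edge branch, the {\tt Cut} on line~6 costs $O(\log|V|)$ and the swap on line~7 costs $O(1)$.

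The scan on lines~8--15 goes over $E^n$, and $|E^n|\le |E|$ because the non-tree edges form a subset of the current edge set. Each scanned edge needs a test of which tree contains $x$ and $y$. This test is a {\tt FindCC}-type query costing $O(\log|V|)$, which would give $O(|E|\log|V|)$ per deleted edge and overshoot the claimed bound. This is the main obstacle. To get $O(1)$ per test, I would use the fact that $T_v$ is the smaller tree: enumerate its vertices once into a temporary hash set, at cost $O(|T_v|)\subseteq O(|V|)$. Since each vertex of $T_v$ is incident to some edge of the component, $|T_v| \le |E|+1$, so this enumeration is $O(|E|)$ per iteration. Each membership test is then $O(1)$.

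The remaining steps of the iteration are cheap. A successful {\tt Link} on line~10 costs $O(\log|V|)$ and happens at most once, because the loop breaks. Building $E^n_u$ and $E^n_v$ adds $O(1)$ per scanned edge. The replacement on line~17 is $O(1)$ in expectation, provided the components in $\extendedGraphSet$ are keyed by tree root.

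Adding these up, one iteration costs $O(\log|V|+|E|)$, so the loop costs $O(|E^d|(\log|V|+|E|))$. Line~18 touches each $v\in V^d$ once with an $O(1)$ removal, which is $O(|V^d|)\subseteq O(|V|)$. This gives the first bullet.

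\textbf{Incremental case.} Line~1 creates a singleton Splay tree and an entry in $\extendedGraphSet$ for each vertex of $V^d$, costing $O(|V|)$ in total. For each $(u,v)\in E^d$, the two {\tt FindCC} calls cost $O(\log|V|)$, and so does the optional {\tt Link}.

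The only delicate step is $E^n\assign E^n_u\cup E^n_v$ on line~7, since a naive union could cost $O(|E|)$ per edge. I would argue that in this algorithm the union is cheap in one of two ways:
\begin{itemize}
\item Every edge being inserted was deleted in the matching {\tt DecUpdateCC} call, and the vertices of $V^d$ start as singletons with empty non-tree edge sets. So one side of the union is empty or was created during this call, and the union can be done by pointer reuse in $O(1)$.
\item Otherwise, use the standard merge-smaller-into-larger argument and charge the cost to the $O(|V|)$ term.
\end{itemize}
The else branch on line~9 is an $O(1)$ hash insertion. Summing over the loop gives $O(|E^d|\log|V|+|V|)$, which is the second bullet.
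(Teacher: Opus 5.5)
Your overall plan matches the paper's: per-edge costs of {\tt FindCC}, {\tt Cut}, {\tt Link} and the replacement search, plus a linear vertex term. Your decremental accounting is sound. It even fills in a detail the paper skips: the paper just asserts the scan is $O(|E|)$, while you justify $O(1)$ membership tests by hashing the vertices of $T_v$ in $O(|T_v|)\subseteq O(|E|)$ time.

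The gap is at line~7 of Algorithm~\ref{alg:incgencc}. Your first bullet is false. A side \emph{created during this call} need not be small, because once the singleton of some $a\in V^d$ is linked to a pre-existing component, it inherits that component's whole non-tree edge set. For a concrete instance, take two $m$-cliques $K,K'$ and a vertex $a$ adjacent to $x\in K$ and $y\in K'$. Run Algorithm~\ref{alg:incgencc} with $V^d=\{a\}$ and $E^d=\{(a,x),(a,y)\}$ on $\extendedGraphSet=\{K,K'\}$, which is exactly the state the matching {\tt DecUpdateCC} leaves. This arises in DXD when a column has a single row that is a cut vertex between two dense blocks. The first edge is an $O(1)$ pointer reuse. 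The second edge, however, must unite $E^n_K$ and $E^n_{K'}$, each of size $\binom{m}{2}-m+1=\Theta(m^2)$, while the claimed bound is $O(m)$.

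Your second bullet fails on the same instance. Merging smaller into larger still costs $\Theta(\min(|E^n_u|,|E^n_v|))=\Theta(m^2)$ there. In general that argument yields $O(|E|\log|E|)$, because the objects being moved are non-tree edges, not vertices, so none of this can be charged to $O(|V|)$.

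So as long as line~7 materializes the union in a per-tree hash set, which is the implementation the paper describes, the second bullet cannot be derived. The same concern applies to merging the per-tree vertex hash sets on {\tt Link}, which you do not charge at all. The paper's proof does not resolve this either: it counts only {\tt FindCC} and {\tt Link} per inserted edge and folds all merging into an unexplained $O(|V|)$ term.

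To make your argument work, switch to a representation with $O(1)$ union:
\begin{itemize}
\item Keep each component's non-tree edges and vertices in circular doubly linked lists that are spliced on {\tt Link}.
\item Add global hash maps sending each edge (with a tree/non-tree flag) and each vertex to its list cell.
\end{itemize}
With this representation:
\begin{itemize}
\item The test and removal at line~3 of Algorithm~\ref{alg:decgencc} remain $O(1)$.
\item Splitting the lists after an unsuccessful replacement search costs $O(|E^n|+|T_v|)\subseteq O(|E|)$, which the first bullet already covers.
\item Every iteration of Algorithm~\ref{alg:incgencc} becomes $O(\log|V|)$. Together with the $O(|V^d|)$ cost of line~1, this gives the second bullet.
\end{itemize}
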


\begin{proof}


\looseness=-1
We first analyze the time complexity of Algorithm \ref{alg:decgencc}.
Algorithm \ref{alg:decgencc} traverse each edge $e: (u, v) \in E^{d}$. 
For each edge, the algorithm first invoke the procedure ${\tt FindCC}$ to find the connected component containing $e$, which takes $O(\log |V|)$ time.
If $e$ is a non-tree edge, then it is directly removed from $E^n$.
Otherwise, the algorithm invokes the operation ${\tt Cut}$, which splits the current Euler tour into two Euler tour $T_u$ and $T_v$, accomplished in $O(\log |V|)$ time.
Then, the algorithm searches for a non-tree edge that reconnects $T_u$ and $T_v$.
In the worst case, this search requires $O(|E|)$ time.
Once a replacement edge is found, the algorithm performs the operation ${\tt Link}$ to merge the two trees into a single tree, which takes $O(\log |V|)$ time.
If $e$ is a non-tree edge
The above three operations ${\tt FindCC}$, ${\tt Cut}$ and ${\tt Link}$ are executed at most once for each edge $e$.
Finally, the algorithm rebuilds the new connected components from the graph if some connected components splits during edges deletion.
This takes $O(|V|)$ time.
The time complexity of the algorithm is $O(|E^d| \cdot (\log |V| + |E|) + |V|)$.

\looseness=-1
We then analyze the time complexity of Algorithm \ref{alg:incgencc}.
Similarly to Algorithm \ref{alg:decgencc}, Algorithm \ref{alg:incgencc} traverse each edge $e: (u, v) \in E^{d}$.
For each edge , the algorithm calls two ${\tt FindCC}$ operations to find the connected components $(T_u, E^n_u)$ and $(T_v, E^n_v)$ containing $u$ and $v$, each of which takes $\log |V|$ time.
If $u$ and $v$ are in two different connected components, one operation ${\tt Link}$ is performed to merge the two spanning trees $T_u$ and $T_v$, taking $O(\log |V|)$ time.
Otherwise, $u$ and $v$ are in the same connected component, $(u, v)$ is simply inserted into $E^n$, which takes a constant time.
The algorithm also need to recompute the new connected components if some connected components merge during edged deletion. 
This takes $O(|V|)$ time.
The time complexity of Algorithm \ref{alg:incgencc} is $O(|E^{d}| \cdot \log |V| + |V|)$.
\end{proof}

\looseness=-1
Henzinger~\cite{HenK1999} proposed algorithms to update spanning trees and the set of non-tree edges.
The main difference between their algorithms and ours is the following:
(1) They consider only single edge deletion and insertion while our approach allows the deletion or insertion of multiple vertices and their associated edges;
(2) They only support connectivity queries in polylogarithmic time.
However, obtaining the set $V$ of connected vertices in an updated spanning tree requires an additional $O(|V|)$ time per edge insertion or deletion.
In contrast, we explicitly maintain the set $V$ so as to directly retrieve all connected vertices when original connected components neither split nor merge. 

\subsection{Complexity Analysis}

Let $r$ be the number of rows of the incidence matrix $M$, $c$ the number of columns and $n$ the number of non-zero entries.
Let $z$ and $d$ denote the sizes of the resulting ZBDD and decision-ZDNNF constructed by Algorithms DXZ and DXD, respectively.

\begin{theorem}\label{thm:complexity}
\begin{itemize}
    \item The time complexity of Algorithm DXZ is $O(2^{r} n)$ and the space complexity is $O(n + z)$.
    \item The time complexity of Algorithm DXD is $O(2^{r} (r^{2} + n))$ and the space complexity is $O(n + r^{2} + d)$.
    \item The time complexity of Algorithm DynDXD is $O(2^{r} r^{4})$ and the space complexity is $O(n + r^{2} + d)$.
\end{itemize}
\end{theorem}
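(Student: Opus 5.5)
The proof proceeds by bounding the number of distinct recursive calls that do real work, and then the cost of each such call. Both are charged through the memoization cache keyed by ${\tt Col}(M)$.

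\textbf{Bounding the number of calls.} In each of DXZ, DXD and DynDXD, a call that misses the cache is determined by the current submatrix. After a sequence of covers, the current submatrix consists exactly of the rows not yet removed, so it is fixed by a subset of the $r$ rows. Hence there are at most $2^{r}$ distinct keys. Every cache miss stores its result, so each key is expanded at most once. All other calls are cache hits or terminal cases. Each of these is charged to the expanded parent that issued it, and each expanded call issues at most $O(n)$ children, one per entry of column $c$ and of its rows. This parent-charging bound is also the one used again at the end of the DynDXD case.

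\textbf{DXZ.} Inside one expanded call, covering $c$ and each column $j$ of every interacted row $r$, then uncovering them, touches each non-zero entry a constant number of times. The Cover/Uncover costs are proportional to the affected entries, so the local work is $O(n)$, giving time $O(2^{r}n)$. For space, the dancing links occupy $O(n)$. Each cache entry points to a ZBDD node created once, so the cache plus the diagram is $O(z)$. The recursion stack is also within these bounds, since its depth is at most $r$ and Cover/Uncover restore the structure in place. Together this gives $O(n+z)$.

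\textbf{DXD.} On top of the DXZ work, an expanded call recomputes connected components of the primal graph by BFS. The primal graph has at most $r$ vertices and $O(r^{2})$ edges, so this costs $O(r^{2})$. ${\tt DecomposeMatrix}$ costs $O(n)$. Each decomposable node built is charged to its cache key. The per-call cost is therefore $O(r^{2}+n)$, and the total is $O(2^{r}(r^{2}+n))$. Storing the primal graph and component sets adds $O(r^{2})$ space, giving $O(n+r^{2}+d)$. I will note that the subcalls ${\tt DXD}(M_i,\{G_i\})$ have keys that are column subsets, so they are covered by the same $2^{r}$ count.

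\textbf{DynDXD.} Here the cost comes from the calls to Algorithms~\ref{alg:decgencc} and~\ref{alg:incgencc}, which I bound using Theorem~\ref{thm:dynamic_complexity}. The graph has $|V|\le r$ and $|E|\le r^{2}$, and the edges removed in one cover satisfy $|E^{d}|\le|E|=O(r^{2})$. So one DecUpdateCC costs $O(r^{2}(\log r+r^{2}))=O(r^{4})$, and one IncUpdateCC costs $O(r^{2}\log r)$.

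The main obstacle is that an expanded call issues up to $O(n)$ such updates, one per covered column, which would naively give $O(nr^{4})$. To avoid this, I will charge updates per deleted row instead. Across the whole sequence of covers in one call, each row, and hence each edge, is deleted once and reinserted once per iteration of the row loop. Summed over the iterations, the edge sets $E^{d}$ therefore total $O(r\cdot r^{2})$. Each non-tree search is charged $O(r^{2})$, which I aim to bring to $O(r^{4})$ per expanded call by amortizing the search cost over deleted tree edges. The $n$ dancing-link term I absorb using $n\le rc$ with $c$ treated as dominated, and I will flag this as the weakest point of the argument.

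If that amortization fails, the fallback is to charge each update call to the distinct submatrix it produces. Each produced matrix is again one of the $\le 2^{r}$ keys, and each key arises from a parent with at most $O(r)$ rows to branch on, so this still gives $2^{r}$ times a polynomial and yields $O(2^{r}r^{4})$ after the same accounting. Space matches DXD: the Euler-tour splay trees, hash sets and vertex map are $O(r+r^{2})$, giving $O(n+r^{2}+d)$.
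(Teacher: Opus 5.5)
Structurally, your DXZ and DXD paragraphs follow the paper's proof. Both bound the number of submatrices by $2^r$, multiply by a per-submatrix cost ($O(n)$ for ${\tt Cover}$/${\tt Uncover}$, plus $O(r^2)$ for BFS in DXD), and add the sizes of the dancing links, primal graph and diagram. For DynDXD the paper only substitutes $|V|<r$, $|E^d|<r^2$, $|E|<r^2$ into Theorem~\ref{thm:dynamic_complexity}, getting $O(r^4)$ and $O(r^2\log r)$ per update, and multiplies by $2^r$. It never says how many updates a submatrix incurs, and it drops the dancing-link term. You are right that this charging is unjustified, but your replacement does not close the gap, so the third bullet is not proved.

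\emph{The amortization.} Your own count is $O(r)$ branches $\times$ $O(r^2)$ deleted edges $\times$ $O(r^2)$ per replacement search, i.e.\ $O(r^5)$ per expanded call. The promised amortization ``over deleted tree edges'' does not help in the worst case. A successful replacement leaves the number of tree edges unchanged, so one branch can trigger $\Theta(|E^d|)$ searches, each scanning up to $\Theta(r^2)$ non-tree edges. Moreover, the edges are reinserted and re-deleted in every branch.

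\emph{The fallback.} The updates at lines 17--19 of Algorithm~\ref{alg:dxd} run \emph{before} the recursive call at line 20 reaches its cache lookup. So a submatrix reached from several parents pays for its updates each time. Your bound ``each key arises from a parent with at most $O(r)$ rows'' multiplies rather than absorbs, giving $O(2^r\cdot r\cdot r^4)$.

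\emph{The $n$ term.} ``$n\le rc$ with $c$ dominated'' is an assumption not in the statement. DynDXD still executes ${\tt Cover'}$ and ${\tt Uncover}$, and with a few rows of very many columns $n$ is not $O(r^4)$. As stated, the bound needs an $n$ term as in the DXD bullet.

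The accounting that works is per branch $(K,\rho)$, not per key. Within one branch the sets $E^d_j$, $j\in\rho$, are pairwise disjoint, because an edge removed by one ${\tt DecUpdateCC}$ is not reinserted before the branch ends. Using the true $O(|V^d_j|)$ cost of the last loop of Algorithm~\ref{alg:decgencc} instead of the $+|V|$ term, a branch therefore costs $O(r^2(\log r+r^2)+|\rho|)=O(r^4+|\rho|)$ in decremental updates.

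The same per-branch view exposes a false step in your DXZ paragraph. An entry of a row $\rho'$ is unlinked once for every branch row $\rho$ of $c$ that meets $\rho'$, so entries are not touched a constant number of times per expanded call. The honest bound is $O(|\mathrm{rows}(c)|\cdot n)$ per key. Hence, in all three bullets, what must be bounded by $O(2^r)$ is the number of branches, and a count of keys only bounds that by $r\cdot 2^r$. Neither your proposal nor the paper's per-submatrix assertion supplies the sharper bound.

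Finally, ``keys are column subsets'' does not by itself give the $2^r$ count. You need that every key, including those of the component calls ${\tt DXD}(M_i,\set{G_i})$, is determined by a set of rows.
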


\begin{proof}
\looseness=-1
The three algorithms: DXZ, DXD and DynDXD recursively transform a submatrix of the incidence matrix $M$ into a ZBDD or decision-ZDNNF.
The number of submatrices of $M$ is $2^r$.
	

\looseness=-1
We first analyze the time and space complexity of DXZ.
For each submatrix, DXZ performs a sequence of ${\tt Cover}$ and ${\tt Uncover}$ operations. 
In the worst case, performing the above operations requires $O(n)$ time.
The time complexity of DXZ is $O(2^{r} \cdot n)$.
The space complexity of DXZ consists of a dancing link for $M$ and the unique table and cache table for ZBDDs.
The dancing link takes $O(n)$ space while ZBDDs take $O(z)$.
The space complexity of DXZ is $O(n + z)$.

\looseness=-1
Then, we analyze the time and space complexity of DXD.
In addition to DXZ, DXD performs BFS to compute the connected components of the graph corresponding for each submatrix. 
The time complexity admits an upper bound $O(r + r^2) = O(r^2)$.
The time complexity of DXD is $O(2^{r} \cdot (r^{2} + n))$.
Moreover, DXD maintains a primal graph consisting of $r$ vertices and at most $O(r^2)$ edges.
The unique table and cache for decision-ZDNNF nodes take $O(d)$ space.
The space complexity of DXZ is $O(n + r^2 + d)$.

\looseness=-1
Finally, we analyze the time and space complexity of DynDXD.
According to Theorem \ref{thm:dynamic_complexity}, the time complexity of Algorithm \ref{alg:decgencc} is $O(|E^d| \cdot (\log |V| + |E|) + |V|)$.
Since $|V| < r$, $|E^d| < r^2$ and $|E| < r^2$, the time complexity of Algorithm \ref{alg:decgencc} is  $O(r^4)$.
%
Similarly, the time complexity of Algorithm~\ref{alg:incgencc} is $O(r^2 \log r)$.
As a result, the time complexity of DynDXD is $O(2^{r} \cdot r^{4})$.
In addition, the extra data structures for dynamical updating connected components also take $O(r^2)$.
The space complexity of DynDXD is the same as DXD, which is $O(n + r^2 + d)$.

\end{proof}

\looseness=-1
Both DXZ and DynDXD have exponential-time complexity relative to the number of rows in the worst case.
DXZ has better worst-case time and space complexities than DynDXD in theory because the latter requires an additional step for generating connected components.
However, if the current matrix can be decomposed into multiple independent submatrices during the compilation process, DynDXD counts the exact covers of each submatrix in parallel while DXZ still treats the matrix as a single entity and solves the problem in a sequential manner.
Moreover, decision-ZDNNFs are exponentially more succinct than ZBDDs (Theorem \ref{thm:succinct}).
The efficiency of the algorithm is highly sensitive to the size of the representation of all exact covers.
In summary, DynDXD is an effective practical approach.

\looseness=-1
Suppose that $G = (V, E)$ is a graph, $V^{d}$ is a set of vertices to be deleted and $E^d$ is the set of edges incident to $V^d$.
We observe that the following conditions hold in most cases:
(1) Both $V^d$ and $E^d$ account for only a small proportion of vertices and edges in $G$, respectively; 
(2) No connected component splits (resp. merges) after vertices and edges deletions (resp. insertions);
(3) The maximum number $|E^r|$ of edges searched for finding replacement edges in Algorithm \ref{alg:decgencc} is relatively small.
Under the above conditions, DynDXD maintains the set $\extendedGraphSet$ of connected components in time $O(|E^{d}| \cdot (\log |V| + |E^r|) + |V^{d}|)$ for each deletion and insertion of $V^d$ and $E^d$.
In contrast, DXD requires $O(|V| + |E|)$ time. 
In summary, DynDXD can generate new connected components significantly more efficiently than DXD, which will be demonstrated in the experimental results. 

\begin{figure}[b]
  \centering
  \begin{minipage}[t]{0.48\columnwidth}
    \centering
    \includegraphics[width=1.0\columnwidth]{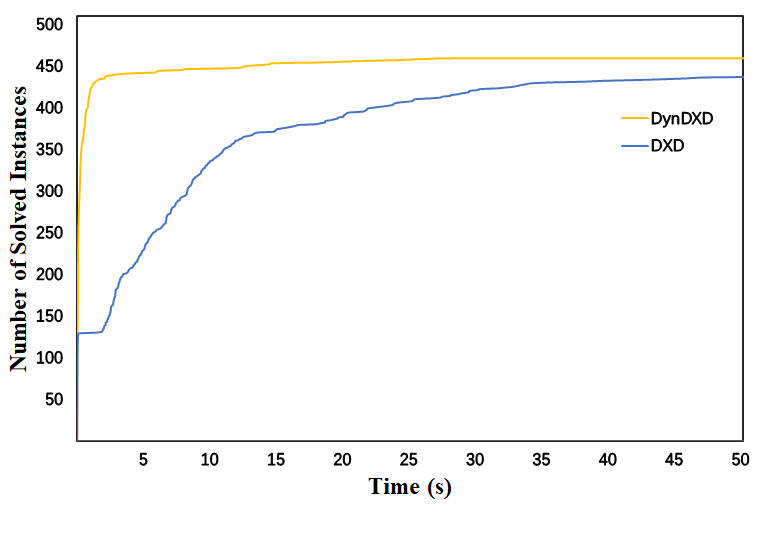}
    \caption{Runtime comparison between DynDXD and DXD.}
    \label{fig:dyndxd_dxd}
  \end{minipage}
  \hfill
  \begin{minipage}[t]{0.48\columnwidth}
    \centering
    \includegraphics[width=1.0\columnwidth]{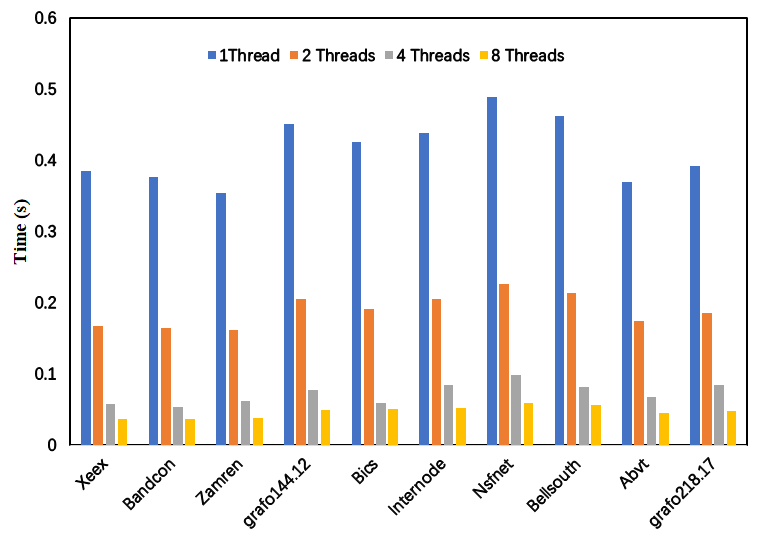}
    \caption{Runtime scalability of DynDXD under different threads.}
    \label{fig:dyndxd_multithread}
  \end{minipage}
\end{figure}

\section{Experiments}

\subsection{Dataset and Settings}

\looseness=-1
We construct exact cover instances from two benchmarks: Internet Topology Zoo \cite{KniNFBR2011} and Rome Graphs \cite{CouMN2014}, also used in Nishino et al.~\cite{NisYN2021}.
We use 245 graphs from Topology Zoo with the number of vertices in the range $[5, 1000]$ and the number of edges in the range $[5, 1500]$. 
We randomly select 255 out of 11,534 graphs from Rome Graphs where the number of vertices and edges of each graph is in the same range as Topology Zoo.
We have 500 graphs in total.
The exact cover instance for a graph is constructed as follows.
We first enumerate all connected components of the graph.
For each component, we select a special vertex and then randomly choose 30\% of the vertices as element vertices.
We enumerate all cycles that contain the special vertex and at least one element vertex.
All element vertices of all components form the universe $\universe$. 
Each cycle corresponds to a subset $S$ of the family $\collection$ where an element $v \in S$ iff the cycle contains element vertex $v$.

\looseness=-1
We compare DynDXD with DLX, DXZ, D3X, and two propositional model counters: SharpSAT-TD \cite{KorJ2021} and ExactMC \cite{LaiMY2021}.
We apply ladder encoding \cite{JunK2010} to translate each exact cover instance into a propositional formula in conjunctive normal form. 
We show the experimental results about the total 500 instances in the following.
All experiments were conducted on a Linux machine with Intel Core i7-10700 2.90GHz CPU and 64GB RAM with a timeout of 1,500 seconds.

\subsection{Results}

\begin{table}[t]
	\centering
	\small
    \begin{tabular}{|c|c|c|c|c|}
        \hline
        \#Thread       & 1 & 2  & 4  & 8  \\ 
        \hline
        Time   & 979.76      & 502.12       & 466.52        & 401.24       \\ 
        \hline
    \end{tabular}
    \vspace*{1mm}
    \caption{\small Total runtime of DynDXD under different threads} 
    \label{tab:dyndxd_multithread}    
\end{table}

\begin{table}[b]
	\setlength\tabcolsep{4pt}
	\centering
	\small
	\begin{tabular}{|c|c|c|c|c|c|c|}
		\hline
		Solvers       & \textbf{DLX} & \textbf{DXZ} & \textbf{D3X} & \textbf{DynDXD} & \textbf{sharpSAT-TD} & \textbf{ExactMC}  \\ 
		\hline
		Instances   & 182      & 460       & 183       & 462     & 425       & 438  \\ 
		\hline
	\end{tabular}
	\vspace*{1mm}
	\caption{\small Total instances solved within the time limit} 
	\label{tab:overall_instances}    
\end{table}

\subsubsection{DynDXD vs DXD}

We first compare the runtime performance of DynDXD and DXD.
As Fig. \ref{fig:dyndxd_dxd} illustrates, DynDXD solves approximately 460 instances within 50 seconds. 
In comparison, DXD solves only slightly more than 440 instances within the same time limit.
DynDXD tends to be much faster than the DXD on every instance.
This improvement is because DynDXD decomposes the matrix via dynamically updating connected components. 

\looseness=-1
\subsubsection{Scalability under Multi-Threading}
To evaluate scalability, we vary the number of threads over 1, 2, 4 and 8.
Table \ref{tab:dyndxd_multithread} reports the total execution time of DynDXD across all instances.
Specifically, the total execution time decreases from 979.76 seconds with 1 thread to 502.12 seconds with 2 threads, achieving a $1.95\times$ speedup.
When increasing the number of threads to 4, the runtime is further reduced to 466.52 seconds, corresponding to a $2.10\times$ speedup over the 1-thread execution.
With 8 threads, the runtime reaches 401.24 seconds, yielding an overall $2.44\times$ speedup compared to 1 thread.
Fig. \ref{fig:dyndxd_multithread} compares the multi-threaded runtime of DynDXD across 10 instances during the search.
In these 10 instances, the 8-thread DynDXD attains a speedup close to $8\times$ compared to the 1-thread execution.

\begin{table}[!t]
\setlength\tabcolsep{1.5pt}
\centering
\small
\resizebox{\columnwidth}{!}{
\begin{tabular}{|l|c|c|c|c|c|c|c|c|c|c|c|c|c|}
\hline
\multirow{2}{*}{\textbf{Instance}} 
& \multirow{2}{*}{\textbf{\#Col}} 
& \multirow{2}{*}{\textbf{\#Row}} 
& \multicolumn{6}{c|}{\textbf{Time (s)}} 
& \multirow{2}{*}{\textbf{\#Sol}} 
& \multirow{2}{*}{\textbf{\#Sub}} 
& \multicolumn{3}{c|}{\textbf{Size}} \\
\cline{4-9} \cline{12-14}
\rule{0pt}{2.6ex}
& & 
& \textbf{DLX} 
& \textbf{DXZ} 
& \textbf{D3X} 
& \textbf{DynDXD} 
& \textbf{sharpSAT-TD} 
& \textbf{ExactMC} 
& &
& \textbf{$|$ZBDD$|$} 
& \textbf{$|$DNNF$|$} 
& \textbf{Ratio} \\
\hline
\rule{0pt}{2.6ex}

AsnetAm        & 531 & 2658 & TO &  2.66 & TO &  \textbf{1.35} & 309.81 &  66.43 & \num{1.9e+111} & 20 & 4848 &  \textbf{2443} & 0.504 \\ 
BtNorthAmerica & 528 & 2675 & TO &  2.48 & TO &  \textbf{0.57} & 259.23 &  30.51 & \num{8.5e+107} & 20 & 4804 &  \textbf{2421} & 0.504 \\ 
Dfn            & 536 & 2790 & TO &  2.65 & TO &  \textbf{0.57} & 300.18 &  33.26 & \num{7.5e+107} & 20 & 5114 &  \textbf{2576} & 0.504 \\ 
Garr200902     & 508 & 3038 & TO &  2.58 & TO &  \textbf{0.94} & 325.36 &  49.89 & \num{2.4e+125} & 20 & 5856 &  \textbf{2947} & 0.503 \\ 
Garr201001     & 491 & 2765 & TO & 12.42 & TO &  \textbf{1.28} & 229.98 &  98.23 & \num{8.3e+123} & 20 & 5304 &  \textbf{2671} & 0.504 \\ 
Garr201007     & 520 & 3012 & TO &  3.01 & TO &  \textbf{1.19} & 321.65 & 196.72 & \num{1.5e+122} & 20 & 5588 &  \textbf{2813} & 0.503 \\ 
Garr201008     & 518 & 2832 & TO &  3.37 & TO &  \textbf{0.85} & 304.04 &  63.83 & \num{1.6e+115} & 20 & 5374 &  \textbf{2706} & 0.504 \\ 
Garr201010     & 522 & 2785 & TO &  2.14 & TO &  \textbf{0.66} & 241.60 &  38.86 & \num{3.5e+115} & 20 & 5312 &  \textbf{2675} & 0.504 \\ 
Garr201104     & 532 & 2671 & TO &  3.43 & TO &  \textbf{0.60} & 289.24 &  31.23 & \num{4.3e+109} & 20 & 4958 &  \textbf{2498} & 0.504 \\ 
Garr201105     & 532 & 2787 & TO &  1.87 & TO &  \textbf{0.83} & 317.03 &  74.84 & \num{2.2e+112} & 20 & 5166 &  \textbf{2602} & 0.504 \\ 
Garr201108     & 533 & 2725 & TO &  3.40 & TO &  \textbf{1.04} & 293.23 &  58.01 & \num{9.8e+112} & 20 & 5060 &  \textbf{2549} & 0.504 \\ 
Garr201109     & 535 & 2986 & TO &  2.94 & TO &  \textbf{1.20} & 248.55 & 180.37 & \num{8.8e+117} & 20 & 5608 &  \textbf{2823} & 0.503 \\ 
Garr201110     & 537 & 2650 & TO &  1.64 & TO &  \textbf{0.59} & 268.77 &  79.35 & \num{3.6e+106} & 20 & 4662 &  \textbf{2350} & 0.504 \\ 
Garr201201     & 543 & 2814 & TO &  2.31 & TO &  \textbf{0.88} & 296.75 &  76.49 & \num{2.1e+106} & 20 & 5070 &  \textbf{2554} & 0.504 \\ 
Iij            & 458 & 2635 & TO &  1.66 & TO &  \textbf{0.61} & 214.25 &  34.11 & \num{1.1e+116} & 20 & 5046 &  \textbf{2542} & 0.504 \\ 
Latnet         & 521 & 2986 & TO &  2.88 & TO &  \textbf{0.83} & 299.92 & 172.03 & \num{3.0e+120} & 20 & 5596 &  \textbf{2817} & 0.503 \\ 
Missouri       & 532 & 2703 & TO &  1.52 & TO &  \textbf{0.68} & 199.55 &  33.12 & \num{3.5e+110} & 20 & 4864 &  \textbf{2451} & 0.504 \\ 
Palmetto       & 496 & 2716 & TO & 62.73 & TO &  \textbf{0.94} & 257.57 &  46.60 & \num{3.7e+113} & 20 & 5030 &  \textbf{2534} & 0.504 \\ 
Sinet          & 528 & 2773 & TO &  1.87 & TO &  \textbf{0.69} & 303.29 &  19.27 & \num{1.5e+108} & 20 & 5096 &  \textbf{2567} & 0.504 \\ 
Tinet          & 540 & 2806 & TO &  3.43 & TO &  \textbf{0.86} & 256.57 &  89.78 & \num{7.1e+110} & 20 & 5130 &  \textbf{2584} & 0.504 \\ 
Tw             & 550 & 3132 & TO &  8.33 & TO &  \textbf{2.29} & 288.17 & 542.90 & \num{6.2e+128} & 20 & 5838 &  \textbf{2938} & 0.503 \\ 
grafo10097.95  & 552 & 2828 & TO &  3.10 & TO &  \textbf{0.99} & 255.14 &  23.80 & \num{6.8e+109} & 20 & 5228 &  \textbf{2633} & 0.504 \\ 
grafo10106.100 & 557 & 2721 & TO &  2.32 & TO &  \textbf{0.89} & 217.16 &  38.54 & \num{7.2e+104} & 20 & 4962 &  \textbf{2500} & 0.504 \\ 
grafo10110.97  & 550 & 2919 & TO &  2.31 & TO &  \textbf{0.96} & 278.81 & 126.61 & \num{1.5e+113} & 20 & 5344 &  \textbf{2691} & 0.504 \\ 
grafo10115.92  & 550 & 2738 & TO &  3.31 & TO &  \textbf{1.03} & 273.21 &  28.53 & \num{4.0e+106} & 20 & 4902 &  \textbf{2470} & 0.504 \\ 
grafo10145.94  & 588 & 2816 & TO &  3.24 & TO &  \textbf{0.65} & 298.99 &  34.75 & \num{4.1e+105} & 20 & 5022 &  \textbf{2530} & 0.504 \\ 
grafo10158.91  & 562 & 2793 & TO &  3.86 & TO &  \textbf{2.12} & 324.65 &  41.88 & \num{8.3e+107} & 20 & 5196 &  \textbf{2617} & 0.504 \\ 
grafo10196.94  & 564 & 2843 & TO &  4.33 & TO &  \textbf{0.92} & 271.19 &  21.83 & \num{7.9e+108} & 20 & 5172 &  \textbf{2605} & 0.504 \\ 
grafo10198.94  & 564 & 2679 & TO &  2.03 & TO &  \textbf{0.92} & 265.39 &  15.99 & \num{6.5e+104} & 20 & 5004 &  \textbf{2521} & 0.504 \\ 
grafo10211.92  & 555 & 2896 & TO &  1.95 & TO &  \textbf{0.78} & 301.90 &  95.41 & \num{2.7e+112} & 20 & 5304 &  \textbf{2671} & 0.504 \\ 

\hline

\end{tabular}
}
\vspace*{1mm}
\caption{\small Experimental results on benchmark graphs.}
\label{tab:exact_cover_all}
\end{table}

\subsubsection{Overall Comparison}

We compare DynDXD using 8 threads with other excat cover solvers.
An overview of the experimental results is shown in Table \ref{tab:overall_instances}.
DynDXD solves the largest number of instances (462) within the time limit (1,500 seconds), followed by DXZ with 460 instances.
sharpSAT-TD and ExactMC solve 425 and 438 instances, respectively.
In comparison, DLX and D3X solve significantly fewer instances, with 182 and 183 instances solved.

\looseness=-1 
The experimental results are further analyzed in Table \ref{tab:exact_cover_all}, which reports the top 30 instances ranked by the number of solutions, excluding those instances where all solvers exceed the time limit.
Columns ``\#Col'' and ``\#Row'' indicate the number of columns and rows of the matrix, respectively.
Column ``Time'' reports runtimes of solvers in seconds, with `TO' denoting a timeout.
Column ``\#Sol'' records the number of exact covers, while ``\#Sub'' denotes the number of independent submatrices generated during the solving process.
Columns $|$ZBDD$|$ and $|$DNNF$|$ denote the sizes of the compilation representations by DXZ and DynDXD, respectively, and the ratio is defined as $|$DNNF$|/|$ZBDD$|$.
DynDXD is the fastest method on every instance reported in Table \ref{tab:exact_cover_all}.
For most instances with a large number of solutions, DLX and D3X result in timeouts because they enumerate all exact covers. 
DynDXD achieves an average speedup of 3.50$\times$ over DXZ, and produces smaller representations on all instances.
The compiled Decision-ZDNNFs have an average size of $50\%$ relative to the corresponding ZBDDs.
Moreover, DynDXD is up to nearly two orders of magnitude faster than SharpSAT-TD and ExactMC.

\section{Conclusion}
We have proposed a new parallel algorithm DXD which compiles the set of all exact covers into a decision-ZDNNF.
To further improve the efficiency of DXD, we have incorporated a dynamic algorithm for updating connected components under vertices deletion and insertion.
Experimental results show that DynDXD outperforms existing exact cover solvers as well as state-of-the-art model counters in terms of runtime performance.
Moreover, the Decision-ZDNNFs generated by DynDXD are consistently more compact than the corresponding ZBDDs constructed by DXZ.



\bibliography{sat2026}

@article{Akers1978,
  title={{Binary Decision Diagrams}},
  author={Sheldon B. Akers},
  journal={IEEE Transactions on Computers},
  volume={27},
  number={6},
  pages={509-516},
  year={1978}
}

@article{BeaLRS2017,
	title   = {{Exact Model Counting of Query Expressions: Limitations of Propositional Methods}},
	author  = {Paul Beame and Jerry Li and Sudeepa Roy and Dan Suciu},
	journal = {ACM Transactions on Database Systems},
	volume  = {42},
	number  = {1},
	pages   = {1--46},
	year    = {2017}
}

@article{BirL1999,
	author       = {Elazar Birnbaum and Eliezer L. Lozinskii},
	title        = {{The Good Old Davis-Putnam Procedure Helps Counting Models}},
	journal      = {Journal of Artificial Intelligence Research},
	volume       = {10},
	pages        = {457--477},
	year         = {1999}
}

@article{Bry1986,
  title={{Graph-based Algorithms for Boolean Function Manipulation}},
  author={Randal E. Bryant},
  journal={IEEE Transactions on Computers},
  volume={35},
  number={8},
  pages={677--691},
  year={1986}
}

@InCollection{CouMN2014,
  title={{Experimental evaluation of a branch and bound algorithm for computing pathwidth}},
  author={David Coudert and Dorian Mazauric and Nicolas Nisse},
  booktitle={Proceedings of the Thirtieth International Symposium on Experimental Algorithms (SEA-2014)},
  series={Lecture Notes in Computer Science},
  volume={8504},
  pages={46--58},
  year={2014},
  publisher={Springer}
}

@article{Dar2001,
	title   = {{Decomposable Negation Normal Form}},
	author  = {Adnan Darwiche},
	journal = {Journal of the ACM},
	volume  = {48},
	number  = {4},
	pages   = {608-647},
	year    = {2001}
}

@inproceedings{Dar2002,
  author    = {Adnan Darwiche},
  title     = {{A Compiler for Deterministic, Decomposable Negation Normal Form}},
  booktitle = {{Proceedings of the Eighteenth National Conference on Artificial Intelligence (AAAI-2002)}},
  pages     = {627-634},
  year      = {2002}
}

@article{GunM2011,
  title   ={{Entropy Minimization for Solving Sudoku}},
  author  ={Jake Gunther and Todd Moon},
  journal ={IEEE Transactions on Signal Processing},
  year    ={2011},
  volume  ={60},
  number  ={1},
  pages   ={508-513}
}

@article{HenK1999,
  title   = {{Randomized Fully Dynamic Graph Algorithms with Polylogarithmic Time per Operation}},
  author  = {Monika R. Henzinger and Valerie King},
  journal = {Journal of the ACM},
  volume  = {46},
  number  = {4},
  pages   = {502-516},
  year    = {1999}
}

@article{HitN1979,
	title={{A Technique for Implementing Backtrack Algorithms and Its Application}},
	author={Hirosi Hitotumatu and Kohei Noshita},
	journal={Information Processing Letters},
	volume={8},
	number={4},
	pages={174-175},
	year={1979}
}

@article{HopT1973,
	author = {John Hopcroft and Robert E. Tarjan},
	title = {{Algorithm 447: Efficient Algorithms for Graph Manipulation}},
	year = {1973},
	publisher = {Association for Computing Machinery},
	volume = {16},
	number = {6},
	journal = {Communications of the ACM},
	pages = {372-378}
}

@inproceedings{JrP2000,
	author       = {Roberto J. Bayardo Jr. and Joseph Daniel Pehoushek},
	title        = {{Counting Models Using Connected Components}},
	booktitle    = {{Proceedings of the Seventeenth National Conference on Artificial Intelligence (AAAI-2000)}},
	pages        = {157-162},
	year         = {2000}
}

@inproceedings{JunK2010,
  author={Tommi Junttila and Petteri Kaski},
  title={{Exact Cover via Satisfiability: An Empirical Study}},
  booktitle={{Proceedings of the Sixteenth International Conference on Principles and Practice of Constraint Programming (CP-2010)}},
  pages={297-304},
  year={2010}
}

@inproceedings{Karp1972,
  title     ={{Reducibility among Combinatorial Problems}},
  author    ={Richard M. Karp},
  booktitle ={Proceedings of a symposium on the Complexity of Computer Computations},
  pages     ={85-103},
  year      ={1972}
}

@article{KniNFBR2011,
  author  = {Simon Knight and Hung X. Nguyen and Nicklas Falkner and Rhys Bowden and Matthew Roughan},
  title   = {{The Internet Topology Zoo}},
  journal = {IEEE Journal on Selected Areas in Communications},
  volume  = {29},
  number  = {9},
  pages   = {1765-1775},
  year    = {2011}
}

@article{Knu2000,
  title   = {{Dancing Links}},
  author  = {Donald E. Knuth},
  journal = {Millennial Perspectives in Computer Science},
  year    = {2000},
  pages   = {187-214}
}

@inproceedings{Koi2006,
  author    ={Mikko Koivisto},
  title     ={{An $O^*(2^n)$ Algorithm for Graph Coloring and Other Partitioning Problems via Inclusion--Exclusion}},
  booktitle ={{Proceedings of the Forty-seventh Annual IEEE Symposium on Foundations of Computer Science (FOCS-2006)}},
  pages     ={583-590},
  year      ={2006}
}

@inproceedings{KorJ2021,
	author       = {Tuukka Korhonen and Matti J{\"{a}}rvisalo},
	title        = {{Integrating Tree Decompositions into Decision Heuristics of Propositional Model Counters}},
	booktitle    = {Proceedings of the Twenty-Seventh International Conference on Principles and Practice of Constraint	Programming (CP-2021)},
	pages        = {8:1-8:11},
	year         = {2021}
}

@inproceedings{LagM2017,
	title     = {{An Improved Decision-DNNF Compiler}},
	author    = {Jean-Marie Lagniez and Pierre Marquis},
	booktitle = {{Proceedings of the Twenty-Sixth International Joint Conference on Artificial Intelligence (IJCAI-2017)}},
	pages     = {667-673},
	year      = {2017}
}

@inproceedings{LaiMY2021,
  title={{The power of literal equivalence in model counting}},
  author={Yong Lai and Kuldeep S. Meel and Roland HC Yap},
  booktitle={Proceedings of the Thirty-Fifth AAAI Conference on Artificial Intelligence (AAAI-2021)},
  pages={3851--3859},
  year={2021}
}

@inproceedings{Min1993,
  author    ={Shin{-}ichi Minato},
  title     ={{Zero-suppressed BDDs for Set Manipulation in Combinatorial Problems}},
  booktitle ={{Proceedings of the Thirtieth Design Automation Conference (DAC-1993)}},
  pages     ={272-277},
  year      ={1993}
}

@inproceedings{Min1994,
	author       = {Shin{-}ichi Minato},
	title        = {{Calculation of Unate Cube Set Algebra Using Zero-Suppressed BDDs}},
	booktitle    = {{Proceedings of the Thirty-First Design Automation Conference (DAC-1994)}},
	pages        = {420-424},
	year         = {1994}
}

@inproceedings{NisYMN2017,
  author    ={Masaaki Nishino and Norihito Yasuda and Shin{-}ichi Minato and Masaaki Nagata},
  title     ={{Dancing with Decision Diagrams: a Combined Approach to Exact Cover}},
  booktitle ={{Proceedings of the Thirty-First AAAI Conference on Artificial Intelligence (AAAI-2017)}},
  pages     ={868-874},
  year      ={2017}
}

@inproceedings{NisYN2021,
  author={Masaaki Nishino and Norihito Yasuda and Kengo Nakamura},
  title={{Compressing Exact Cover Problems with Zero-suppressed Binary Decision Diagrams}},
  booktitle={{Proceedings of the Thirtieth International Joint Conference on Artificial Intelligence (IJCAI-2021)}},
  pages={1996-2004},
  year={2021}
}

@inproceedings{SangBB2004,
	title     = {{Combining Component Caching and Clause Learning for Effective Model Counting}},
	author    = {Tian Sang and Fahiem Bacchus and Paul Beame and Henry Kautz and Toniann Pitassi},
	booktitle = {{Proceedings of the Seventh International Conference on Theory and Applications of Satisfiability Testing (SAT-2004)}},
	year      = {2004}
}

@article{SleT1985,
  title     = {{Self-Adjusting Binary Search Trees}},
  author    = {Daniel D. Sleator and Robert E. Tarjan},
  journal   = {Journal of the ACM},
  volume    = {32},
  number    = {3},
  pages     = {652-686},
  year      = {1985}
}

@inproceedings{Thu2006,
	title     = {{sharpSAT - Counting Models with Advanced Component Caching and Implicit BCP}},
	author    = {Marc Thurley},
	booktitle = {{Proceedings of the Ninth International Conference on Theory and Applications of Satisfiability Testing (SAT-2006)}},
	pages     = {424-429},
	year      = {2006}
}

@article{HuaD2007,
	title   = {{The Language of Search}},
	author  = {Jinbo Huang and Adnan Darwiche},
	journal = {Journal of Artificial Intelligence Research},
	volume  = {29},
	pages   = {191-219},
	year    = {2007}
}

@book{Weg1987,
	title    = {{The Complexity of Boolean Functions}},
	author   = {Ingo Wegener},
	year     = {1987},
	publisher={John Wiley \& Sons, Inc.}
}

@book{Weg2000,
  title={{Branching Programs and Binary Decision Diagrams: Theory and Applications}},
  author={Ingo Wegener},
  year={2000},
  publisher={SIAM}
}

@book{Knu2016,
	title     = {{The Art of Computer Programming, Volume 4B: Combinatorial Algorithms}},
  author    = {Donald E. Knuth},
  year      = {2016},
  publisher = {Addison-Wesley Professional}
}

@article{DarM2002,
	title   = {{A Knowledge Compilation Map}},
	author  = {Adnan Darwiche and Pierre Marquis},
	journal = {Journal of Artificial Intelligence Research},
	volume  = {17},
	number  = {1},
	pages   = {229-264},
	year    = {2002}
}

@inproceedings{KimC1990,
  title={{A Parallel Algorithm for Constructing Binary Decision Diagrams}},
  author={Shinji Kimura and Edmund M. Clarke},
  booktitle={{Proceedings of the Fifth IEEE International Conference on Computer Design (ICCD-1990)}},
  pages={220-223},
  year={1990}
}

\appendix


\section{Splay Trees: Data Structure and Algorithms}

\looseness=-1
In this section, we introduce the data structure: splay tree and its algorithms that serve as the foundation on representing spanning trees.
A splay tree is a type of balanced search tree.
Each splay tree node contains the following five fields: $\tt value$, $\tt left$, $\tt right$, $\tt parent$ and $\tt size$.
The $\tt value$ field stores the value of this node.
The $\tt left$, $\tt right$ and $\tt parent$ fields, denote the left child, the right child and the parent of this node, used to maintain the structure of the balanced search tree.
The $\tt size$ field records the total number of nodes in the subtree rooted at this node.

\begin{algorithm}[t]\small
\caption{${\tt Rotate}(n)$}
\label{alg:rotate}
\KwIn{$n$: a node of a splay tree}

    $p \assign \parentfield[n]$

    $g \assign \parentfield[p]$

    \If{$n = \leftfield[p]$}{
        $\leftfield[p] \assign \rightfield[n]$

        \lIf{$\rightfield[n] \neq nil$}
        {
            $\parentfield[{\rightfield[n]}] \assign p$
        }
        $\rightfield[n] \assign p$
    }
    \Else
    {
        $\rightfield[p] \assign \leftfield[n]$

        \lIf{$\leftfield[n] \neq nil$}
        {
            $\parentfield[{\leftfield[n]}] \assign p$
        } 
        $\leftfield[n] \assign p$
    }

    $\parentfield[p] \assign n$

    $\parentfield[n] \assign g$

    \If{$g \neq nil$}{
        \lIf{$p = \leftfield[g]$}{
            $\leftfield[g] \assign n$
        }
        \lElse{
            $\rightfield[g] \assign n$
        }
    }

    $\sizefield[p] \assign \sizefield[{\leftfield[p]}] + \sizefield[{\rightfield[p]}] + 1$

    $\sizefield[n] \assign \sizefield[{\leftfield[n]}] + \sizefield[{\rightfield[n]}] + 1$
\end{algorithm}

\begin{algorithm}[t]\small
\caption{${\tt Splay}(n)$}
\label{alg:chroot}
\KwIn{$n$: a node of a splay tree}

    \While{$\parentfield[n] \neq nil$}{
        $p \assign \parentfield[n]$

        $g \assign \parentfield[p]$

        \lIf{$g = nil$}{
            ${\tt Rotate}(n)$
        }
        \ElseIf{($n = \leftfield[p]$ and $p = \leftfield[g]$) or
                ($n = \rightfield[p]$ and $p = \rightfield[g]$)}{
            ${\tt Rotate}(p)$

            ${\tt Rotate}(n)$
        }
        \Else{
            ${\tt Rotate}(n)$

            ${\tt Rotate}(n)$
        }
    }

\end{algorithm}

\looseness=-1
The two basic operations on splay trees are ${\tt Rotate}$ and ${\tt Splay}$, illustrated in Algorithm \ref{alg:rotate} and \ref{alg:chroot}, respectively.
The operation ${\tt Rotate}(n)$ moves a node $n$ to the position of its parent, while preserving the linear order.
It first locates the parent node $p$ and the grandparent node $g$ of $n$ (lines 1 \& 2).
If $n$ is the left child of $p$, it performs right rotation (lines 4 - 6).
Otherwise, it performs left rotation (lines 8 - 10).
Then, the parent node of $p$ is $n$, and the parent of $n$ is $g$ (lines 11 \& 12).
It adjust the child nodes of $g$.
If $p$ was the left child of $g$, the left child of $g$ is replaced by $n$ (line 14).
Otherwise, the right child of $g$ is replaced by $n$ (line 15).
Finally, the sizes of the splay trees rooted at $p$ and $n$ are recomputed, respectively (lines 16 \& 17).
The operation ${\tt Rotate}(n)$ requires constant time. 

\looseness=-1
The operation ${\tt Splay}(n)$ moves $n$ to the root of the splay tree.
Algorithm \ref{alg:chroot} repeatedly applies rotations to move node $n$ to the root of its splay tree.
The algorithm moves the node $n$ to the position of its parent until it becomes the root.
In each iteration, if the parent node $p$ has no parent (\ie, $g = \texttt{nil}$), a single rotation is performed to move $x$ directly above $p$ (line 4).
If both $n$ and its parent $p$ are on the same side of their respective parents (either both are left children or both are right children), the algorithm first
rotates $p$ above $g$, and then rotates $n$ above $p$ to preserve the relative structure while reducing the depth of $n$ (lines 6 \& 7).
Otherwise, $n$ and $p$ are located on opposite sides.
In this case, two consecutive rotations on $n$ are applied to restructure the tree and bring $n$ closer to the root (lines 9 \& 10).
The amortized complexity of operation ${\tt Splay}(n)$ is $O(\log |V|)$ where $|V|$ is the number of nodes of the splay tree.

\section{Algorithms for Euler Tour Representation of Spanning Trees}
\begin{algorithm}[t]\small
\caption{${\tt Concatenate}(n_u, n_v)$}
\label{alg:concat}
\KwIn{$n_u, n_v$: two root nodes of two different Euler tours $T_u$ and $T_v$}
\KwOut{$n$: the root node of the concatenation of $T_u$ and $T_v$}

    \lIf{$n_u = nil$}
    {
        \Return{$n_v$}
    }

    \lIf{$n_v = nil$}{
        \Return{$n_u$}
    }

    $n \assign$ the tail occurrence of $T_u$

    ${\tt Splay}(n)$

    $\rightfield[n] \assign n_v$

    $\parentfield[n_v] \assign n$

    $\sizefield[n] \assign \sizefield[n_u] + \sizefield[n_v]$

    \Return{$n$}

\end{algorithm}

\begin{algorithm}[t]\small
	\caption{${\tt AdjustHead}(u)$}
	\label{alg:AdjustHead}
	\KwIn{$u$: a vertex in an Euler tour $T_u$}
	\KwOut{$n$: the root node of an Euler tour with the head occurrence $u$}
	
	$u_1 \assign$ the first occurrence of $u$ in $T_u$
	
	${\tt Splay}(u_1)$
	
	$u_l \assign \leftfield[u_1]$
	
	$\leftfield[u_1] \assign nil$
	
	$\parentfield[u_l] \assign nil$
	
	$u_r \assign \rightfield[u_1]$
	
	$\rightfield[u_1] \assign nil$
	
	$\parentfield[u_r] \assign nil$
	
	Delete $u_1$ from $T_u$
	
	$u_2 \assign$ a new occurrence node of $u$
	
	$u_r \assign {\tt Concatenate}(u_r, u_2)$
	
	$n \assign {\tt Concatenate}(u_r, u_l)$
	
	\Return{$n$}
\end{algorithm}

\begin{algorithm}[t]\small
	\caption{${\tt Cut}(u, v)$}
	\label{alg:cut}
	\KwIn{$u, v$: two vertices in the same Euler tour $T$}
	\KwOut{$n_u, n_v$: two root nodes of two separated Euler tours with vertex $u$ and $v$}
	
	$u_e \assign$ the occurrence of $u$ s.t. the next occurrence is $v$
	
	${\tt Splay}(u_e)$
	
	$n_{ul} \assign \leftfield[u_e]$
	
	$n_{ur} \assign \rightfield[u_e]$
	
	\lIf{$n_{ul} \neq nil$}
	{
		$\parentfield[n_{ul}] \assign nil$
	}
	
	\lIf{$n_{ur} \neq nil$}
	{
		$\parentfield[n_{ur}] \assign nil$
	}
	
	Delete $u_e$ from $T$
	
	$v_e \assign$ the occurrence of $v$ s.t. the next occurrence is $u$
	
	${\tt Splay}(v_e)$
	
	$n_{vl} \assign \leftfield[v_e]$
	
	$n_{vr} \assign \rightfield[v_e]$
	
	\lIf{$n_{vl} \neq nil$}
	{
		$\parentfield[n_{vl}] \assign nil$
	}
	
	\lIf{$n_{vr} \neq nil$}
	{
		$\parentfield[n_{vr}] \assign nil$
	}
	
	Delete $v_e$ from $T$
	
	$n_u \assign {\tt Concatenate}(n_{ul}, n_{vr})$
	
	$n_v \assign n_{vl}$
	
	\Return{$n_u, n_v$}
	
\end{algorithm}

\begin{algorithm}[t]\small
	\caption{${\tt Link}(u, v)$}
	\label{alg:link}
	\KwIn{$u, v$: two vertices in two different Euler tours $T_u$ and $T_v$}
	\KwOut{$n$: the root node of a merged Euler tour}
	
	$n_u \assign {\tt AdjustHead}(u)$
	
	$n_v \assign {\tt AdjustHead}(v)$
	
	Create a node $n'_{u}$ representing the vertex $u$ of edge $(u, v)$
	
	Create a node $n'_v$ representing the vertex $v$ of edge $(v, u)$
	
	$n \assign {\tt Concatenate}(n_u, n'_v)$
	
	$n \assign {\tt Concatenate}(n, n'_u)$
	
	$n \assign {\tt Concatenate}(n, n_v)$
	
	\Return{$n$}
\end{algorithm}

\looseness=-1
We use Euler tours to represent spanning trees.
An Euler tour is a sequence $v_1 v_2 v_3 \cdots v_n$ that represents a traversal of the tree $T$ in which each edge is visited exactly twice.
Edge $(v_i, v_{i + 1})$ is on the tree $T$ for $1 \leq i \leq n - 1$.
Each appearance of a vertex $v$ in an Euler tour is called an \textit{occurrence} of $v$.
The first element of an Euler tour is called the \textit{head occurrence} while the last element is called the \textit{tail occurrence}.
A segment $S$ of the Euler tour for vertex $v$ is defined as the contiguous subsequence between the first and the last occurrence of $v$.
For example, consider the Euler tour $T: u_1 v_1  u_2 x_1 y_1 x_2 u_3$.
The head occurrence of is $u_1$ and the tail occurrence is $u_3$.
The segment for $x$ is $x_1 y_1 x_2$.
An Euler tour is represented by a splay tree.
For simplicity, we do not distinguish the two concepts of Euler tours and splay trees.

\looseness=-1
The four basic operations on Euler tours are ${\tt Concatenate}$, ${\tt AdjustHead}$, ${\tt Cut}$ and ${\tt Link}$, illustrated in Algorithm \ref{alg:concat}, \ref{alg:AdjustHead}, \ref{alg:cut} and \ref{alg:link}, respectively.
The operation ${\tt Concatenate}(n_u, n_v)$, illustrated in Algorithm \ref{alg:concat}, merges two Euler tours $T_u$ and $T_v$ in which two vertices $u$ and $v$ are, respectively.
If $n_u$ (resp. $n_v$) is an empty tree, the algorithm directly returns $n_v$ (resp. $n_u$) (lines 1 \& 2).
The algorithm then locates the tail occurrence $n$ of $T_u$ (line 3).
Then, it makes $n$ to the root of $T_u$ (line 4).
The Euler tour $T_v$ is appended to $T_u$ by attaching $n_v$ as the right child of $n_u$ and assign the parent of $n_v$ to be $n_u$ (lines 5 \& 6).
The size of the splay tree rooted at $n$ is the sum of the length of two Euler tours (line 7).
Finally, the root $n$ is returned (line 8).

\looseness=-1
The operation ${\tt AdjustHead}(u)$, illustrated in Algorithm \ref{alg:AdjustHead}, adjusts the Euler tour $T_u$ so that the first occurrence of vertex $u$ becomes the head occurrence.
The algorithm first locates the first occurrence $u_1$ of $u$ in $T_u$ (line 1).
Then, it make $u_1$ to be the root node of $T_u$ (line 2).
Next, the left child $u_l$ and the right child $u_r$ of $u_1$ are detached from $u_1$ (lines 3 - 8).
The node $u_1$ is deleted from $T_u$ (line 9).
A new occurrence node $u_2$ of vertex $u$ is attached to $u_r$ (line 10 \& 11).
Finally, $u_r$ and $u_l$ are concatenated, producing an Euler tour rooted at $n$ (line 12).
The algorithm returns $n$ (line 13).

\looseness=-1
The operation ${\tt Cut}(u, v)$, Algorithm \ref{alg:cut} splits an Euler tour $T$ into two separate Euler tours $T_u$ and $T_v$.
The algorithm first locates the occurrence $u_e$ of $u$ corresponding to edge $(u, v)$ (line 1), 
It then makes $_e$ to the root node of $T$.
The left subtree $n_{ul}$ and the right subtree $n_{ur}$ of $u_e$ are then detached from $T$ (lines 3 - 6).
The occurrence $u_e$ is removed from $T$ (line 7).
Next, the algorithm applies the same procedure to the removed tree $T$ for edge $(v, u)$ and obtain two subtrees $n_{vl}$ and $n_{vr}$ (lines 8 - 14).
The Euler tour rooted at $n_u$ is obtained by concatenating $n_{ul}$ and $n_{vr}$ (line 15).
The other rooted at $n_v$ is $n_{vl}$ (line 16).

\looseness=-1
The operation ${\tt Link}(u, v)$, illustrated in Algorithm \ref{alg:link}, merges two Euler tours corresponding to vertices $u$ and $v$ into a single Euler tour.
The algorithm first adjusts vertex $u$ and $v$ to be the root of the Euler tour $T_u$ and $T_v$, respectively (lines 1 \& 2). 
It then creates two occurrence nodes $n'_{u}$ and $n'_{v}$, representing vertex $u$ of edge $(u, v)$ and $v$ of edge $(v, u)$, respectively (lines 3 \& 4).
Next, the Euler tours $n_u$, $n'_v$, $n'_u$ and $n_v$ are concatenated as an Euler tour rooted at $n$ (lines 5 - 7). 

\end{document}